\newcommand{\HL}[1]{{\color{black}{#1}}} 
\newcommand{\XY}[1]{{\color{black}{#1}}} 
\newcommand{\Yang}[1]{{\color{black}{#1}}} 
\newcommand{\ADD}[1]{{\color{black}{#1}}}
\newtheorem{proof}{\textbf{Proof}}
\newtheorem{lem}{\textbf{Lemma}}
\newtheorem{theorem}{\textbf{Theorem}}
\newtheorem{remark}{\textbf{Remark}}
\newtheorem{definition}{\textbf{Definition}}
\begin{document}

\title{A Theoretical Analysis of \XY{Efficiency Constrained} Utility-Privacy Bi-Objective Optimization in Federated Learning}

\author{Hanlin~Gu\IEEEauthorrefmark{1} ,
Xinyuan Zhao\IEEEauthorrefmark{1},
Gongxi Zhu,
Yuxing Han,
Yan Kang,
Lixin~Fan,~\IEEEmembership{Member,~IEEE,}
and~Qiang~Yang,~\IEEEmembership{Fellow,~IEEE}

\thanks{\IEEEauthorrefmark{1} Hanlin Gu and Xinyuan Zhao contribute equally in this paper}
\thanks{\IEEEauthorrefmark{2} Yuxing Han is the corresponding author}
\thanks{This work was supported by the National Natural Science Foundation of China (NO.62206154) and Shenzhen Startup Funding (No. QD2023014C).}}





\IEEEtitleabstractindextext{\begin{abstract}
Federated learning (FL) enables multiple clients to collaboratively learn a shared model without sharing their individual data. Concerns about utility, privacy, and training efficiency in FL have garnered significant research attention. Differential privacy has emerged as a prevalent technique in FL, safeguarding the privacy of individual user data while impacting utility and training efficiency. Within Differential Privacy Federated Learning (DPFL), previous studies have primarily focused on the utility-privacy trade-off, neglecting training efficiency, which is crucial for timely completion. Moreover, differential privacy achieves privacy by introducing controlled randomness (noise) on selected clients in each communication round. Previous work has mainly examined the impact of noise level ($\sigma$) and communication rounds ($T$) on the privacy-utility dynamic, overlooking other influential factors like the sample ratio ($q$, the proportion of selected clients). This paper systematically formulates an efficiency-constrained utility-privacy bi-objective optimization problem in DPFL, focusing on $\sigma$, $T$, and $q$. We provide a comprehensive theoretical analysis, yielding analytical solutions for the Pareto front. Extensive empirical experiments verify the validity and efficacy of our analysis, offering valuable guidance for low-cost parameter design in DPFL.



\end{abstract}

\begin{IEEEkeywords}
trustworthy federated learning, multi-objective optimization, differential privacy
\end{IEEEkeywords}}

\maketitle

\IEEEdisplaynontitleabstractindextext

\IEEEpeerreviewmaketitle
\section{Introduction}
The escalating stringency of legal and regulatory parameters, exemplified by initiatives like GDPR\footnote{GDPR is applicable as of May 25th, 2018 in all European member states to harmonize data privacy laws across Europe. \url{https://gdpr.eu/}} and HIPAA\footnote{HIPAA is a federal law of the USA created in 1996. It required the creation of national standards to protect sensitive patient health information from being disclosed without the patient’s consent or knowledge}, has imposed rigorous constraints on user privacy. This has led to a situation where the amalgamation of private data from distinct users or organizations for the purpose of training machine learning models is no longer allowed. Federated learning \cite{mcmahan2017communicationefficient, yang2019federated} is a pioneering paradigm in machine learning that addresses the challenge of training models on decentralized data sources while respecting stringent privacy and security constraints. In federated learning, multiple participating entities or clients collaboratively train a shared machine learning model without directly sharing their raw data. 

Recently, \XY{apart from traditional utility issue, privacy and efficiency} issues in federated learning attract wide research attention.
\XY{In order to prevent privacy leakage from the exchanged information in FL, differential privacy has been proposed as an important privacy protection mechanism \cite{dwork2014algorithmic, abadi2016deeplearningwithDP}, which} is achieved by introducing noise into the exchanged gradients in the training process.
In differential privacy federated learning (DPFL), there is a trade-off between utility and privacy as demonstrated in \cite{yang2019federatedconceptsandapplications, zhang2022nofreelunch, kang2023optimizing}. 
For instance, it was shown that attackers may infer training images at pixel level accuracy even random noise are added to exchanged gradients \cite{he2019model,zhu2019deep,geiping2020inverting,yin2021see,zhao2020idlg}, but exceedingly large noise jeopardise learning reliability and lead to significant degradation of utility \cite{yang2019federatedconceptsandapplications, zhang2022nofreelunch, kang2023optimizing}.
\XY{On the other hand, training efficiency which descries the global training time is also an important concern in the federated learning framework \cite{konevcny2016federated,singh2019detailed,ghosh2020efficient}. Ignoring the training efficiency may lead to exceedingly long training time outside an acceptable timeframe.}

\XY{A series of work attempts to balance utility loss and privacy leakage in DPFL, but neglect the training efficiency.}
\XY{Specifically, these work provides different strategies to search for better parameters to look for optimal privacy-utility trade-off in DPFL.}
Some work \cite{geyer2017differentially,wei2020federated} aimed to minimize the utility loss while accounting for the resulting privacy leakage within the confines of a specified privacy budget ($\epsilon_0$). 
Another line of work \cite{yang2019federatedconceptsandapplications,wu2020theoretical} firstly trained the model until convergence with different noise extent. 
Subsequently, a comparison is drawn between resulting privacy leakages to identify the parameter settings that yield the least privacy leakage. 
\XY{However, the existing methods ignore the training efficiency in DPFL, which refers to the global training time (discussed in Sec. \ref{sec:Privacy Leakage and Utility Loss in DPFL} in detail).}

\XY{Moreover, in the process of searching optimal parameters, the traditional methods primarily concentrate on the influence of parameters as noise level ($\sigma$) and communication rounds ($T$), but ignore sample ratio ($q$) as another important factor}. 
Kang et al. \cite{kang2023optimizing} manipulate the noise level ($\sigma$) to simultaneously enhance privacy, minimize utility and efficiency. Other works \cite{geyer2017differentially,wei2020federated,yang2019federatedconceptsandapplications,wu2020theoretical} put forth different strategies to search for noise level ($\sigma$) and communication rounds ($T$) with aim of achieving optimal privacy-utility trade-off in DPFL.
These methods neglects sample ratio (the ratio of participating clients in each round among all the $K$ clients, denoted as $q$), which has significant influence on both the utility loss \cite{fraboni2022on, cho2020client} and privacy leakage \cite{abadi2016deeplearningwithDP}. \XY{To look for the optimal trade-off by considering the influence of sample ratio ($q$), a naive expansion of current methods as iterating over $q$ suffers from really high computation cost.}

In this work, we formulate a constrained bi-objective optimization problem in Sect. \ref{Section: MOO in DPFL} with aim of \XY{ensuring acceptable training efficiency} and \XY{reducing optimal parameter searching computation cost}.
\XY{This formulation (Eq. \ref{equation: formulation_3factor}) focuses on minimizing the privacy leakage and utility loss, and include an upper constraint of training efficiency to ensure acceptable training time in DPFL.}
\XY{It} identifies the Pareto front encompassing the noise level ($\sigma$), the communication rounds ($T$) and sample ratio ($q$). Moreover, we conduct theoretical analysis to offer insights into the interplay among various parameters in DPFL. Detailed theoretical findings can be found in Thm. \ref{Thm: analytical solution} of Sect. \ref{Section: theoretical analyzing}.
Notably, the Pareto \XY{optimal solutions} for the constrained bi-objective optimization problem in DPFL adheres to the relationship $k\sigma^2T = qK$ ($K$ denotes the total number of clients; $k$ is a constant), \XY{which serves as a powerful tool to help with low cost parameter design in DPFL discussed in Sec. \ref{sec:parameter design}.} 
Finally, experimental results in Sect. \ref{Section: experiment} also verify the theoretical analysis on Pareto optimal solutions\footnote{The definition of Pareto optimal solutions are given in Part C: Bi-Objective Optimization, II. Related Work and Background} on MNIST dataset (with logistic regression and LeNet) and CIFAR-10 dataset (with ResNet-18).
Our main contribution is summarized as following:

\begin{itemize}
\item We formulate the utility loss and privacy leakage \XY{with training efficiency upper constraint} in differential privacy federated learning (DPFL) as constrained bi-objective optimization problem with respect to noise level ($\sigma$), communication rounds ($T$) and sample ratio ($q$). 
\item We theoretically elucidate the analytical Pareto optimal solutions of the \XY{constrained} bi-objective optimization problem in DPFL w.r.t. $\sigma$, $T$ and $q$ under different participant numbers $K$.
\item The experiments on \ADD{MNIST and CIFAR-10} verify the correctness of our \XY{theoretical analysis} of the \XY{constrained} bi-objective optimization problem in DPFL. Moreover, it illustrates the theoretical \XY{analysis} can guide clients to design the effective parameters in DPFL with much lower computation cost than traditional methods.
\end{itemize}

\begin{table}[htbp]
\caption{Table of Notation}
    \begin{center}
        \begin{tabular}{lccc}
            \hline
            Notation & Meaning \\ 
            \hline 
            $\sigma$ & noise level \\
            $\sigma_{max}$ & upper constraint of $\sigma$ \\
            $q$ & sample ratio \\
            $T$ & communication rounds \\
            $T_{max}$ & maximum of communication round $t$ \\
            $E$ & local training epochs within each round \\
            $K$ & total number of clients \\
            $\epsilon_0$ & privacy budget \\
            $D_k$ & private dataset of client $k$ \\
            $D_{test}$ & test dataset \\
            $w^t$ & global model at round $t$ \\
            $\Tilde{\Delta} w_k^t$ & protected model gradients of client $k$ at round $t$ \\
            $w_k^{t,e}$ & local model weight of client $k$ at round $t$ \\
            &and local epochs $e$ \\
            $\Delta w_k^t$ & model gradients of client $k$ at round $t$ \\
            $n_k^t$ & Gaussian noise of client $k$ at round $t$ \\
            $c_{clip}$ & clipping constant \\
            $P_t$ & participating clients at round $t$ \\
            $\eta$ & learning rate \\
            $B$ & batch size \\
            $(x,y)$ & feature-label pair in dataset \\
            $L_{ce}$ & cross-entropy loss function \\
            $\epsilon_p$ & privacy leakage \\
            $\epsilon_u$ & utility loss \\
            $\epsilon_e$ & training efficiency \\
            $\Bar{\epsilon_e}$ & upper constraint of training efficiency \\
            $F_k$ & inference model of client $k$ \\
            \hline
        \end{tabular}
    \label{tab1}
    \end{center}
\end{table}

\section{Related Work}

\subsection{Federated Learning}

With the aim of privacy protection and further improve the model performance, federated learning is proposed, where model is trained on distributed data with different privacy protection mechanisms. Clients update the model locally and periodically communicate with the central server to synchronize the model.
Typically, federated learning deals with a single optimization problem where $K$ clients collaboratively trains the model parameters $w_{fed}$ \cite{kang2023optimizing}.
\begin{equation}
    \begin{split}
        & \mathop{min}\limits_{w_{fed}} \epsilon_u(w_{fed}) \triangleq \sum_{k=1}^{K} \frac{n_k}{n} F_k(w_{fed}) \\
        & F_k(w_{fed})=\mathbb{E}_{\xi \sim D_k}F(w_{fed};\xi) \\
    \end{split}
\end{equation}
$n_k$ represents the size of dataset $D_k$ kept within client $k$ and n is the total size of dataset as $n=\sum_{k=1}^K n_k$. The local model with the $k^{th}$ client $F_k$ is the expectation of the loss function regarding sampling from local dataset $D_k$.

The most well studied algorithm is federated average (FedAVG) and federated stochastic gradient descent 
(FedSGD) \cite{mcmahan2017communicationefficient}. 
FedAVG parallel optimizes the local objective function by using stochastic gradient descent in each client and use trivial average to aggregate the model parameters at the server. 
FedSGD calculates the model update using randomly sampled local data, and upload the model update to central server.
FedAVG and FedSGD is equivalent under the scenario that the number of local epochs $E$ equals one.

\subsection{Differential Privacy}

Differential privacy serves as a strong standard privacy guarantee, which is firstly introduced to protect single instance privacy in terms of adjacent databases \cite{dwork2006calibrating, dwork2010difficulties, dwork2006differential}.
Treating the image-label pair as a instance in database, the $(\epsilon, \delta)$-differential privacy is defined as follows.
\begin{definition}[$(\epsilon, \delta)$-differential privacy in \cite{abadi2016deeplearningwithDP}]
A randomized mechanism $\mathcal{M}: \mathcal{D} \to \mathcal{R}$ with domain $\mathcal{D}$ and range $\mathcal{R}$ satisfies $(\epsilon, \delta)$-differential privacy if for any two adjacent inputs $d$, $d^{\prime} \in \mathcal{D}$ and for any subset of outputs $\mathcal{S} \subseteq \mathcal{R}$ it holds that
$$Pr[\mathcal{M}(d)\in \mathcal{S}] \leq e^{\epsilon}Pr[\mathcal{M}(d^{\prime})\in \mathcal{S}]+\delta$$
where we say that two of these sets are adjacent if they differ in a single entry.
\end{definition}

In deep learning scenario, a series of works under different assumptions are proposed to tighten the privacy leakage bound \cite{bu2020deep, mironov2017renyi} and moments accountant is introduced to quantitatively measure the privacy leakage \cite{abadi2016deeplearningwithDP}.  
\begin{theorem}[Thm. 1 of \cite{abadi2016deeplearningwithDP}]
There exists constant $r$ and constant $s$ so that given the sampling probability $q=\frac{B}{N}$ ($N$ is the number of training set) and the number of total rounds $T$, for any $\epsilon < r q^2 T$, the differentially private SGD algorithm \cite{abadi2016deeplearningwithDP} is $(\epsilon, \delta)$-differential private for any $\delta > 0$ if we choose
$$\sigma \geq s \frac{q \sqrt{T log(1/\delta)}}{\epsilon}$$
\label{theorem:DP}
\end{theorem}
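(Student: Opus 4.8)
The plan is to prove this via the \emph{moments accountant} technique. For a randomized mechanism $\mathcal{M}$ and adjacent inputs $d \sim d'$, define the privacy loss $c(o) = \log \frac{\Pr[\mathcal{M}(d)=o]}{\Pr[\mathcal{M}(d')=o]}$ and the log-moment $\alpha_{\mathcal{M}}(\lambda) = \max_{d \sim d'} \log \mathbb{E}_{o \sim \mathcal{M}(d)}\big[e^{\lambda c(o)}\big]$. Two structural facts drive the argument. First, \emph{composability}: if $\mathcal{M}$ is the adaptive composition of $\mathcal{M}_1,\dots,\mathcal{M}_T$, then $\alpha_{\mathcal{M}}(\lambda) \le \sum_{t=1}^{T}\alpha_{\mathcal{M}_t}(\lambda)$, which follows by conditioning and iterating expectations. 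Second, a \emph{tail bound}: for any $\epsilon>0$, the mechanism $\mathcal{M}$ is $(\epsilon,\delta)$-differentially private with $\delta = \min_{\lambda}\exp\big(\alpha_{\mathcal{M}}(\lambda)-\lambda\epsilon\big)$, which is a Markov-inequality argument applied to $e^{\lambda c}$. So the whole proof reduces to (a) bounding $\alpha$ for a single DP-SGD step, (b) summing over $T$, and (c) optimizing the tail bound over $\lambda$.

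The technical core is step (a): one round of DP-SGD is a \emph{subsampled Gaussian mechanism} — each example is included with probability $q$, and isotropic Gaussian noise of deviation $\sigma$ (relative to the $\ell_2$ sensitivity of the clipped gradient sum) is added. Reducing to one coordinate, this amounts to comparing $\mu_0 = \mathcal{N}(0,\sigma^2)$ with the mixture $\mu_1 = (1-q)\mathcal{N}(0,\sigma^2) + q\,\mathcal{N}(1,\sigma^2)$. I would expand $\mathbb{E}_{\mu_1}\big[(\mu_1/\mu_0)^{\lambda}\big]$ by the binomial theorem, isolate the order-$0$ and order-$1$ terms (which, after taking the logarithm, contribute $1$ and $0$ respectively), and bound the remaining terms with explicit Gaussian moment estimates. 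For $q$ small and integer $\lambda$ in an admissible window such as $\lambda \le \sigma^2\log\!\big(1/(q\sigma)\big)$, this yields a bound of the form $\alpha_{\text{step}}(\lambda) \le \frac{q^2\lambda(\lambda+1)}{(1-q)\sigma^2} + O\!\big(q^3\lambda^3/\sigma^3\big)$, hence $\alpha_{\text{step}}(\lambda) \le c_1 q^2\lambda^2/\sigma^2$ for an absolute constant $c_1$ in the relevant regime.

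Composing over $T$ rounds gives $\alpha_{\mathcal{M}}(\lambda) \le c_1 T q^2\lambda^2/\sigma^2$. Feeding this into the tail bound, it suffices to find $\lambda$ with $c_1 T q^2\lambda^2/\sigma^2 - \lambda\epsilon \le -\log(1/\delta)$. Choosing $\lambda = \epsilon\sigma^2/(c_2 T q^2)$ makes the left side at most $-\epsilon^2\sigma^2/(c_3 T q^2)$, so the requirement becomes $\epsilon^2\sigma^2/(c_3 T q^2) \ge \log(1/\delta)$, i.e. $\sigma \ge s\,\frac{q\sqrt{T\log(1/\delta)}}{\epsilon}$ with $s=\sqrt{c_3}$. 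Finally one checks the side conditions under which the per-step estimate is valid: the chosen $\lambda$ must be at least $1$ (so a comparable integer exists) and must lie below the admissibility threshold $\lambda \lesssim \sigma^2\log(1/(q\sigma))$; combined with the $\sigma$-relation just derived, these collapse into a single condition of the form $\epsilon < r q^2 T$ for a suitable absolute constant $r$.

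The main obstacle I expect is the per-step moment estimate for the subsampled Gaussian: expanding the mixture ratio, showing the low-order terms vanish in the log-moment, and — crucially — controlling the tail of the binomial expansion \emph{uniformly} so that the error is genuinely $O(q^3\lambda^3/\sigma^3)$ throughout an explicit window of $\lambda$. Keeping the constants $r$ and $s$ absolute (independent of $q,T,\sigma,\epsilon,\delta$) forces one to track exactly where ``$q$ small'' and ``$\lambda \le \sigma^2\log(1/(q\sigma))$'' are used; this bookkeeping, rather than any single inequality, is the delicate part of the argument.
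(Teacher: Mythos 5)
Your plan is correct and is precisely the moments-accountant argument by which this result is established in its source: the paper here does not prove Theorem~\ref{theorem:DP} but imports it verbatim from Abadi et al.\ \cite{abadi2016deeplearningwithDP}, whose proof follows exactly your three steps (per-step log-moment bound for the subsampled Gaussian, linear composability over $T$ rounds, and a Markov-type tail bound optimized over $\lambda$, with the side conditions collapsing into $\epsilon < r q^2 T$). The only detail to add when fleshing it out is that the per-step moment bound must be established for both directions of the likelihood ratio (i.e.\ for $\mathbb{E}_{\mu_0}$ and $\mathbb{E}_{\mu_1}$) to cover the max over ordered pairs of adjacent databases.
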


In federated learning, differential privacy also serves as a golden benchmark which people design different algorithms to achieve \cite{seif2020wireless, truex2020ldp}.




\subsection{Multi-objective Optimization}

In multi-objective optimization, the aim is to find a $x$ in the decision space $\mathcal{R}^d$ which can optimize a set of $m$ objective functions as $f_1(w), f_2(w), \dots, f_m(w)$\cite{ehrgott2005multicriteria}.
$$\mathop{min}\limits_{w \in \mathcal{R}^d} G(x) \coloneqq min(f_1(w), f_2(w), \dots, f_m(w))$$

In non-trivial case, all the objective functions cannot achieve their global optimum with the same $x$. The multi-objective optimization methods are used to deal with the contradictions and achieve different optimal trade-off among the objectives. From the point view of decision makers, the multi-objective optimization provides a set of optimal solutions based on different preference and requirements.

\begin{definition}[Pareto dominance in \cite{gunantara2018review}]
For $x_a, x_b \in \mathcal{R}^d$, we say $x_a$ dominates $x_b$ if and only if $f_i(x_a) < f_i(x_b)$ for at least one $i \in [1,2,\dots, n]$ and $f_i(x_a) \leq f_i(x_b)$ for all $i \in [1,2,\dots, n]$.
\end{definition}

\begin{definition}[Pareto optimal solution in \cite{gunantara2018review}]
We say $x^*$ is a Pareto optimal solution if $x^*$ dominates all other $x^{\prime} \in \mathcal{R}^{d}$. 
\end{definition}

\begin{definition}[Pareto set and front in \cite{gunantara2018review}]
Pareto set is the set of $G(x_i)\  i \in [1,2,\dots,n]$, where $x_i \ i \in [1,2,\dots,n]$ is all the Pareto optimal solutions.
Pareto front is the plot of Pareto set $G(x_i)\  i \in [1,2,\dots,n]$ in the objective space.
\end{definition}

Multi-objective optimization can be a challenging job in federated learning.
People use different multi-objective optimization methods as evolutionary algorithms \cite{deb2002fast,deb2014moea3,kim2004spea2+,zhang2007moea}, Bayesian optimization \cite{biswas2022multi,daulton2022multi,laumanns2002bayesian,yang2019multi}, and gradient-based method \cite{desideri2012mgda,desideri2012multiple,liu2021profiling,liu2021stochastic,mahapatra2020multi} to deal with the multi-objective optimization problem in federated learning.
Facing expensive black box scenarios as federated learning, evolutionary algorithms suffers from high computational cost especially facing expensive scenarios as federated learning. 
Bayesian optimization improves the computational efficiency, but its performance highly depends on the surrogate model and acquisition function.
The gradient descent method improves the computational efficiency by finding the directions that simultaneously descend all the objective functions, but requires the gradient information of the objective function. 
\section{MOO in DPFL} \label{Section: MOO in DPFL}
In this section, we formulate the \XY{constrained} bi-objective optimization in Differential Privacy Federated Learning (DPFL) by optimizing the privacy leakage and utility loss simultaneously \XY{with training efficiency constraint}.

\subsection{Setting and Threat Model}
We consider \textit{horizontal federated learning} (HFL) in this paper, which involves $K$ participating parties that each holds a private dataset $D_k, k \in [K]$. We assume the attacker to be \textit{semi-honest}, i.e., it may launch \textit{privacy attacks} on exchanged information to infer participants' private data. For instance, the semi-honest adversary may reconstruct the client's data via the exchanged model gradients \cite{he2019model,zhu2019deep}. 

To mitigate the privacy leakage, each participant applies a protection mechanism to the model information that will be shared with the server. This paper focuses on the differential privacy \cite{abadi2016deeplearningwithDP}, i.e., the local client adds noise on the model gradients. The training procedure involves at least three following steps in each round $t$ (also see Algo. \ref{algorithm: FedSGDwithDP}):
\begin{enumerate}

\item Each client $k$ trains its local model using its private data set $D_k$ for $E$ local epochs, and obtains the local model $w_k^{t,E}$ as in line 12-16 in Algo. \ref{algorithm: FedSGDwithDP}.

\item In order to prevent semi-honest adversaries from inferring other clients' private information $D_k$, each client $k$ clips model gradients $\Delta w_k^t$ and adds Gaussian noise $n_k^t$ as shown in line 17-18 of Algo. \ref{algorithm: FedSGDwithDP} The client sends protected model gradients $\Tilde{\Delta} w_k^t$ to the server as line 19 of Algo. \ref{algorithm: FedSGDwithDP}. 

\item The server aggregates $\Tilde{\Delta} w_k^t, k=1,\cdots,K$ by average in line 8 of Algo. \ref{algorithm: FedSGDwithDP}. The global model $w^{t+1}$ is updated to be $w^{t+1}=w^{t}+ \frac{1}{K}\sum_{i=1}^{K}\Tilde{\Delta} w_k^t$. $w^{t+1} \gets w^{t}+ \frac{1}{|P_t|}\sum_{k\in |P_t|}\Tilde{\Delta} w_k^t$. Then the server distributes global model $w^{t+1}$ to all clients.

\end{enumerate}


\begin{algorithm}
    \caption{
    \textbf{DP-FedSGD}$(T,q,\sigma)$ (Algo. 3 of \cite{zhang2022understanding}): 
    The $K$ clients are indexed by $k$, $P_t$ is the set of participating clients in round $t$; 
    $B$ is the local batch size, $T$ is communication rounds, $E$ is local training epochs;
    $w^t$ is the global model at round $t$, $w_k^{t,e}$ is the local model of client $k$ at round $t$ and local epoch $e$, $\eta$ is the learning rate, $c_{clip}$ is the clipping constant, $\sigma$ is noise level, $n_k^t$ follows $\mathcal{N}(0, \sigma^2)$; $D_{test}$ is the test dataset, $(x_i,y_i)$ is the feature-label pair in $D_{test}$.}
    \begin{algorithmic}[1]
        \STATE \textbf{Server executes:}
            \STATE Randomly initialize $w^0$
            \FOR{$t \in [0,1,\dots,T-1]$}
                \STATE Distribute $w_t$ to clients
                \FOR{Client $k \in [P_t]$ in parallel} 
                    \STATE $\Tilde{\Delta} w_k^t \gets$ \textbf{ClientUpdate}($k$, $w^{t}$)
                \ENDFOR
            \STATE Aggregate model $w^{t+1} \gets w^{t}+ \frac{1}{|P_t|}\sum_{k\in |P_t|}\Tilde{\Delta} w_k^t$
            \STATE Evaluate test loss $L_{ce}^t$ by cross-entropy loss function $L_{ce}(x_i, y_i;w^t)$, $(x_i, y_i) \in D_{test}$ 
            \ENDFOR
            \STATE Return test loss $L_{ce}^t$ for all $t$, $t \in [1,2,\dots, T]$
        \STATE  
        \STATE \textbf{ClientUpdate}($k$, $w^{t}$):     // Run on client k
        \STATE Initialize: $w_k^{t,0} \gets w^{t}$
            \FOR{$e \in [0,1,\dots,E-1]$}
                \STATE Randomly sample batch $b$ (size $B$) in local training dataset $D_k$;
                \STATE Compute gradient $g_k^{t,e} = \nabla F_k(w_k^{t,e};b)$
                \STATE Local update $w_k^{t,e+1} \gets w_k^{t,e} - \eta g_k^{t,e}$
            \ENDFOR
            \STATE Compute the difference of model weights $\Delta w_k^t = w_k^{t,E}-w_k^{t,0}$
            \STATE Add noise $\Tilde{\Delta} w_k^t = \Delta w_k^t/max(1,\frac{{\vert| \Delta w_k^t \vert|}^2}{c_{clip}}) + n_k^t$
            \STATE Return and upload $\Tilde{\Delta} w_k^t$ to server.
    \end{algorithmic}
\label{algorithm: FedSGDwithDP}
\end{algorithm}

\subsection{Privacy Leakage, Utility Loss, \XY{and Training Efficiency} in DPFL} \label{sec:Privacy Leakage and Utility Loss in DPFL}

In this paper, we consider two objectives in the DPFL, i.e., the privacy leakage and utility loss, which is defined as following.

\noindent\textbf{Privacy Leakage.}
We follow Thm. 1 of \cite{abadi2016deeplearningwithDP} and Thm. 3.2 of  \cite{zhang2022understanding} to provide the definition of the differential privacy $\epsilon_p$ leakage of local client in federated learning as:
\begin{equation} \label{eq:privacy-budget}
    \epsilon_p = C \frac{c_{clip} \sqrt{qTlog(1/\delta)}}{\sqrt{K} \sigma},
\end{equation}
where $C$ is a constant, $c_{clip}$ is the clipping constant in differential privacy, $K$ is the total number of participants in federated learning and $q$ is the the sample ratio representing the fraction of participating clients among all clients in each round.

According to Eq. \eqref{eq:privacy-budget}, the privacy budget is influenced by three factors: $q$, $T$, and $\sigma$. Specifically, the privacy leakage is positively related to $q$ and $T$ while negatively related to $\sigma$.

\noindent\textbf{Utility Loss.}
The utility loss $\epsilon_u$ of a federated learning system is defined as follows:
\begin{equation}\label{def:utility}
        \epsilon_u = \text{U}({w_{\text{fed}}^{O}}) - \text{U}(w_{\text{fed}}^{D}),
    \end{equation}
where $\text{U}(w_{\text{fed}}^{D})$ and $\text{U}(w_{\text{fed}}^{O})$ measure the utility of protected global model $w_{\text{fed}}^{D}$ and unprotected global model $w_{\text{fed}}^{O}$, respectively. 
Moreover, the following Lemma \ref{Lemma: Upper Bound for DP-FedAvg} (Cor. 3.2.1 of \cite{zhang2022understanding}) provides the theoretical upper bound for Algo. \ref{algorithm: FedSGDwithDP}.

\begin{lem}[Adapted from Cor. 3.2.1 of \cite{zhang2022understanding}]
For Algo. \ref{algorithm: FedSGDwithDP}, 
assume $F_k(x)$ satisfies $\vert| \nabla F_k(x) - \nabla F_k(y) \vert | \leq L \vert| x-y \vert |, \forall k,x,y$, $\min_xF(x) \geq F^{*}$,
$G$ is the bound on stochastic gradient, $C_{clip}$ is the clipping constant with $C_{clip} \geq \eta EG$.
$\frac{\eta}{qK} \leq \eta$ as $qK \geq 1$.

\noindent By letting $\eta \leq min\{\frac{qK}{6EL(P-1)}, \frac{qK}{96E^2}, \frac{1}{\sqrt{60}EL}\} $, we have 
\begin{equation*}
\frac{1}{T}\sum_{t=1}^T\mathbb{E} [\vert| \nabla F(x^t) \vert|^2] \leq \mathbf{O} (\frac{1}{\eta E T} + \eta^2E^2+\eta) + \mathbf{O} (\frac{\sigma^2}{\eta qKE}),\end{equation*}
where $T$ and $E$ are communication rounds and local training epochs, $\eta$ is learning rate, $K$ is the total number of clients, $q$ is sample ratio, and $\sigma$ is the noise level in differential privacy (standard derivation of added noise).
    \label{Lemma: Upper Bound for DP-FedAvg}
\end{lem}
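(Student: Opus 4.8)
\emph{Proof plan.} The statement is quoted as an adaptation of Cor.~3.2.1 of \cite{zhang2022understanding}, so the plan has two parts: verify that Algo.~\ref{algorithm: FedSGDwithDP} falls within the regime of that corollary, and then reproduce the non-convex descent argument while tracking where $\sigma$, $q$, $T$ and $E$ enter. The crucial first reduction is to show the clipping in line~18 is vacuous under the hypotheses: over $E$ local SGD steps with step size $\eta$ and stochastic-gradient bound $G$ we have $\|\Delta w_k^t\| = \|\eta\sum_{e=0}^{E-1} g_k^{t,e}\| \le \eta E G \le C_{clip}$, hence $\max(1,\|\Delta w_k^t\|/C_{clip}) = 1$ and the uploaded update is simply $-\eta\sum_{e=0}^{E-1} g_k^{t,e} + n_k^t$ with $n_k^t\sim\mathcal{N}(0,\sigma^2 I)$. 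So the global iteration is plain FedAvg perturbed by an independent zero-mean Gaussian term on the aggregated direction, namely $x^{t+1}-x^t = \frac{1}{|P_t|}\sum_{k\in P_t}\left(-\eta\sum_{e=0}^{E-1} g_k^{t,e} + n_k^t\right)$.

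Next I would apply the $L$-smoothness of $F$ to consecutive global iterates, $F(x^{t+1}) \le F(x^t) + \langle\nabla F(x^t),\, x^{t+1}-x^t\rangle + \frac{L}{2}\|x^{t+1}-x^t\|^2$, and take expectation conditioned on $x^t$ over both the random subset $P_t$ and the noise. Uniform client subsampling makes $\frac{1}{|P_t|}\sum_{k\in P_t}(\cdot)$ an unbiased estimator of the full-client average and $\mathbb{E}[n_k^t]=0$, so the cross term reduces to $-\eta\sum_{e}\langle\nabla F(x^t),\, \frac{1}{K}\sum_k\nabla F_k(w_k^{t,e})\rangle$; writing $\nabla F_k(w_k^{t,e}) = \nabla F_k(x^t) + \big(\nabla F_k(w_k^{t,e})-\nabla F_k(x^t)\big)$ peels off the descent term $-\eta E\|\nabla F(x^t)\|^2$ and leaves a client-drift error controlled by $\|\nabla F_k(w_k^{t,e})-\nabla F_k(x^t)\| \le L\|w_k^{t,e}-x^t\|$ together with the recursive bound $\mathbb{E}\|w_k^{t,e}-x^t\|^2 \le O(\eta^2 e^2 G^2)$; this generates the $\eta^2E^2$ and $\eta$ contributions after multiplication by $L$. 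The quadratic term $\mathbb{E}\|x^{t+1}-x^t\|^2$ contributes two pieces: the squared aggregated gradient, again handled by the drift bounds and $\|\nabla F_k\|\le G$, which is again of order $\eta^2E^2$ and $\eta$; and the isotropic noise variance, which after the $1/|P_t|$ averaging over $|P_t| = qK$ sampled clients is $O(\sigma^2/(qK))$ (with the dimension absorbed into the $O$), and which after dividing by the final $\eta E$ normalization yields the advertised $O(\sigma^2/(\eta qKE))$ term.

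Finally I would telescope the one-step recursion over $t=1,\dots,T$, use $\min_x F(x)\ge F^*$ so that $\sum_t\mathbb{E}[F(x^t)-F(x^{t+1})] \le F(x^1)-F^*$, divide by $T$, and solve for $\frac{1}{T}\sum_t\mathbb{E}\|\nabla F(x^t)\|^2$. The step-size conditions $\eta\le\min\{\frac{qK}{6EL(P-1)},\,\frac{qK}{96E^2},\,\frac{1}{\sqrt{60}\,EL}\}$ --- together with the trivial $\frac{\eta}{qK}\le\eta$ valid since $qK\ge1$ --- are exactly what makes the coefficient of $\|\nabla F(x^t)\|^2$ that leaks onto the right-hand side from the smoothness and drift terms at most half the left-hand coefficient, so it can be absorbed and only the $O(1/(\eta ET) + \eta^2E^2 + \eta) + O(\sigma^2/(\eta qKE))$ terms survive. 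I expect the main obstacle to be the client-drift bookkeeping: carefully tracking the accumulation of per-epoch deviations $w_k^{t,e}-x^t$ and the cross terms they create without losing the correct joint dependence on $E$, $\eta$ and $qK$, since a loose bound there destroys the stated rate. Everything else is constant-chasing inside $O(\cdot)$, after which the bound coincides with Cor.~3.2.1 of \cite{zhang2022understanding}.
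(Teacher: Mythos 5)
The paper does not prove this lemma at all --- it imports it verbatim as Cor.~3.2.1 of \cite{zhang2022understanding} --- so there is no in-paper argument to diverge from; your reconstruction follows exactly the standard route that the cited corollary uses (clipping rendered vacuous by $C_{clip}\ge \eta E G$, the $L$-smoothness descent step with unbiased client subsampling, drift control giving the $\eta^2E^2+\eta$ terms, the $\sigma^2/(qK)$ noise variance becoming $\sigma^2/(\eta qKE)$ after normalization, and telescoping with the step-size conditions absorbing the leaked gradient terms). The outline is correct and identifies all the ingredients needed; the only remaining work is the constant-chasing in the drift recursion, which you rightly flag as the delicate part.
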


Note that the upper bound of the utility loss goes up with the increase of $\sigma$ and the decrease of $q$, $T$.

\begin{remark}
$K$ and $E$ are always pre-defined, which is explored in the experimental ablation study part \ref{sec: ablation study}. $c_{clip}$ is a given constant.
\end{remark}

\XY{
\noindent\textbf{Training Efficiency.}
We use the system training time to describe the training efficiency in DPFL as follows:
\begin{equation}
    \epsilon_e = c_t T
    \label{eq:training_efficiency}
\end{equation}
where $T$ is the communication rounds and $c_t$ is the per-round training time in DPFL, which is treated as a constant due to the nearly uniform training time per round.
}

\subsection{\XY{Constrained} Bi-Objective Optimization in DPFL}\label{sec: Bi-Objective Optimization in DPFL}
Conventionally, existing work aims to minimize the utility loss given the privacy budget $\epsilon_0$ \cite{geyer2017differentially,wei2020federated} , which can be formulated as:
\begin{equation}
    \begin{split}
        &\min\limits_{T, \sigma} \epsilon_u(T, \sigma),  \text{ where } \epsilon_u(T, \sigma) = \sum_{k=1}^Kp_{k}F_{k}(T, \sigma) \\
        & \text{  subject to  } \,\, \epsilon_p(T, \sigma) \leq \Bar\epsilon_0
    \end{split}
    \label{equation: convolutional formulation}
\end{equation}
\HL{where $\epsilon_p$, $\epsilon_u$ and $\Bar{\epsilon_0}$ represent privacy leakage, utility loss and the upper constraint of privacy leakage respectively.}

\XY{However, the existing work only focuses on utility loss and privacy leakage, but ignore the training efficiency. It means that existing work cannot ensure the acceptable training time in DPFL.} 
\XY{Moreover,} both the privacy leakage and utility loss are influenced by the training epoch ($T$), noise level ($\sigma$) and sample ratio ($q$) illustrated in the former section.
It means Eq. \eqref{equation: convolutional formulation} is insufficient to obtain a complete Pareto optimal as it only considers the influence of $\sigma$ and $T$. 
\XY{In this work, we reformulate the optimization problem as the following definition.}

\begin{definition}
    \Yang{
    The utility loss and privacy leakage bi-objective optimization problem with training efficiency constraint w.r.t. noise level ($\sigma$), communication round ($T$), and sample ratio ($q$) in DPFL is:
    \begin{equation}
        \begin{aligned}
            &\min\limits_{T, \sigma, q} ( \epsilon_u(T, \sigma, q), \epsilon_p(T, \sigma, q)),  \\
            & \text{ where } \epsilon_u(T, \sigma, q) = \sum_{k=1}^Kp_{k}F_{k}(T, \sigma, q)\\
            &  \ \ \ \ \ \ \ \ \  \epsilon_p(T, \sigma, q) = C \frac{c_{clip} \sqrt{qTlog(1/\delta)}}{\sqrt{K} \sigma}\\
            & \text{  subject to  } \,\, \XY{\epsilon_e(T,\sigma,q) \leq \Bar{\epsilon_e}},  \\
        \end{aligned}
        \label{equation: formulation_3factor}
    \end{equation}
    where $\epsilon_u$ represents utility loss, $\epsilon_p$ represents the privacy leakage by Eq. \eqref{eq:privacy-budget}, $\epsilon_e$ represents the training efficiency by Eq. \eqref{eq:training_efficiency}, $p_{k}$ is the coefficient of $F_{k}$ satisfying $\sum_{k=1}^Kp_{k}=1$, 
    $\Bar{\epsilon_e}$ is the upper constraint of training efficiency.
    }
\end{definition}

\HL{In the following sections, we provide the analysis the constrained bi-objective optimization problem both theoretically and experimentally.}
On one hand, we provide the theoretical analysis on the Pareto optimal solutions of Eq. \eqref{equation: formulation_3factor} in Sec. \ref{Section: theoretical analyzing}.
On the other hand, we solve the Eq. \eqref{equation: formulation_3factor} by non-dominated sorting Algo. \ref{algorithm: non-dominated sorting} in Sect. \ref{sec: 3factorexperiment} to determine the Pareto optimal solutions.  Moreover, the experimental results in Sect. \ref{Section: experiment} can validate the theoretical conclusion.

\begin{algorithm}
    \caption{Multi-Objective Optimization in DPFL. }
    \begin{algorithmic}
        \STATE \textbf{Step1 Objective Function Calculation:}
        \FOR{sample ratio $q_i \in (0,1]$}
            \FOR{$\sigma_i \in (0,\sigma_{max}]$}
                \STATE Calculate test loss $L_t$ for $\forall T \in [T_{max}]$ by Algo. \ref{algorithm: FedSGDwithDP} as $(L_1, L_2, \dots, L_{T_{max}}) = \text{DP-FedSGD}(T_{max}, \sigma_i, q_i)$
                \STATE Calculate privacy leakage by Eq. \eqref{eq:privacy-budget} as $\epsilon_p(T, \sigma_i, q_i) = C \frac{c_{clip} \sqrt{q_i tlog(1/\delta)}}{\sqrt{K}\sigma_i}\  \forall T \in [T_{max}]$ 
            \ENDFOR
        \ENDFOR
        \STATE 
        \STATE \textbf{Step2 Non-dominated Comparison:}
        \FOR{$(T_i,\sigma_i, q_i)$, $\forall T_i, \sigma_i, q_i$}
            \IF{$(T_i,\sigma_i, q_i)$ Pareto dominates all $(T_j,\sigma_j, q_j), \forall T_j, \sigma_j, q_j$}
            \STATE $(T_i,\sigma_i, q_i)$ is one of the Pareto optimal solutions
            \ENDIF
        \ENDFOR
        \STATE
        \STATE \textbf{Step3 Output:}
        \STATE Return the Pareto optimal solutions
    \end{algorithmic}
    \label{algorithm: non-dominated sorting}
\end{algorithm}
\section{Theoretical Analysis} \label{Section: theoretical analyzing}

In this section, we commence by simplifying the \XY{constrained} bi-objective optimization problem through the utilization of an upper boundary for the utility loss in Sect. \ref{Sec: Simplified Formulation}. 
Subsequently, we conduct a comprehensive theoretical investigation into the \XY{constrained} bi-objective problem, culminating in the derivation of an analytical expression for the Pareto solution involving three distinct parameters: communication rounds ($T$), noise level ($\sigma$), and sample ratio ($q$) in Sect. \ref{sec: analytical solution}. 
Our approach entails initially analysing the Pareto front and Pareto solution encompassing all three parameters \XY{with unconstrained $q$ and $\sigma$}. 
Moreover, with given $q$ \XY{and constrained $\sigma$}, we focus on $T$ and $\sigma$ and detailed analyze the Pareto solution in different cases.

\subsection{Simplified Formulation}\label{Sec: Simplified Formulation}
We obtain a simplified version of the \XY{constrained} bi-objective optimization formulation by using the differential privacy leakage $\epsilon_p(T, \sigma, q)$, upper bound of utility loss $\epsilon_u(T, \sigma, q)$, \XY{and training efficiency $\epsilon_e$} illustrated in Sect. \ref{sec:Privacy Leakage and Utility Loss in DPFL}.

\XY{
\begin{equation}
    \begin{aligned}
        & \text{min}(f_1(T,\sigma, q), f_2(T,\sigma, q)) \\
        & \text{where } f_1(T,\sigma,q) = \mathbf{O} (\frac{1}{\eta E T} + \eta^2E^2 + \eta) + \mathbf{O} (\frac{\sigma^2}{\eta qKE}) \\
        & \ \ \ \ \ \ \ \ f_2(T, \sigma, q) = C \frac{c_{clip} \sqrt{qTlog(1/\delta)}}{\sqrt{K} \sigma} \\
        & \text{subject to } f_3(T,\sigma,q) = c_t T \leq \Bar{\epsilon_e} \\
    \end{aligned}
    \label{equation: upperbound_MOO}
\end{equation}
}

As the number of clients ($K$), the local training epochs ($E$), the learning rate ($\eta$) and the clipping constant ($c_{clip}$) are usually pre-decided in the real scenario, 
\XY{and the per-round training time $c_t$ is approximately uniform each round}, 
we keep $K$, $E$, $c_{clip}$, and \XY{$c_t$} as constants. 

Focusing on parameter $\sigma$, $T$ and $q$, the \XY{constrained} bi-objective formulation can be further simplified as follows according to Lemma \ref{lemma:MOOequivalence}.
\XY{
\begin{equation}
    \begin{aligned}
        & \text{min}(f_1(T,\sigma, q), f_2(T,\sigma, q)) \\
        & \text{where } f_1(T, \sigma, q) = \frac{1}{T} + k\frac{\sigma^2}{q K},\ k \text{ is constant} \\
        & \ \ \ \ \ \ \ \ f_2(T, \sigma, q) = \frac{\sqrt{qT}}{\sigma} \\
        & \text{subject to } f_3(T,\sigma,q) = c_t T \leq \Bar{\epsilon_e} \\
    \end{aligned}
    \label{equation: simplifiedMOO}
\end{equation}
}

\XY{With aim of simplifying the optimization objective functions, we provide the following lemma.}
\begin{lem}
    The Pareto optimal solutions of Eq. \eqref{equation: original} and Eq. \eqref{equation: simplified} are equivalent, where constants $a_1, a_2 \in \mathbf{R}^{+}$ and constants $m_1, m_2 \in \mathbf{R}$.
    \begin{equation}
        \left\{
        \begin{aligned}
            & f_1(x) = a_1 \times h_1(x) + m_1\\
            & f_2(x) = a_2 \times h_2(x) + m_2\\
        \end{aligned}
        \right.
        \label{equation: original}
    \end{equation}

    \begin{equation}
        \left\{
        \begin{aligned}
            & f_1^{\prime}(x) = h_1(x)\\
            & f_2^{\prime}(x) = h_2(x)\\
        \end{aligned}
        \right.
        \label{equation: simplified}
    \end{equation}
    \label{lemma:MOOequivalence}
\end{lem}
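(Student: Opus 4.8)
The plan is to show the two bi-objective problems have the same set of Pareto optimal solutions by showing they induce the same Pareto dominance relation on the decision space. The key observation is that each $f_i$ is obtained from $h_i$ by a strictly increasing affine map $t \mapsto a_i t + m_i$ (strictly increasing because $a_i > 0$), and such maps preserve the order relations $<$ and $\leq$ coordinate-wise. First I would fix two points $x_a, x_b$ in the decision space and write out the definition of ``$x_a$ dominates $x_b$'' for problem \eqref{equation: original}: this requires $f_i(x_a) \leq f_i(x_b)$ for every $i \in \{1,2\}$ and $f_j(x_a) < f_j(x_b)$ for at least one $j$. Then I would substitute $f_i(x) = a_i h_i(x) + m_i$ and use $a_i > 0$ to cancel the affine transformation: $a_i h_i(x_a) + m_i \leq a_i h_i(x_b) + m_i \iff h_i(x_a) \leq h_i(x_b)$, and likewise for the strict inequality. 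This yields exactly the condition ``$x_a$ dominates $x_b$'' for problem \eqref{equation: simplified}, and the argument is reversible, so the dominance relations coincide.

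Once the dominance relations are shown identical, the conclusion follows immediately: a point is Pareto optimal for \eqref{equation: original} precisely when it is not dominated by any other point under that relation (equivalently, using the excerpt's definition, when it dominates all others), and since the relation is the same for \eqref{equation: simplified}, the Pareto optimal sets are equal. I would also remark that the $m_i$ play no role at all (translation is order-preserving regardless of sign, which is why they are only required to be real rather than positive), whereas the sign condition $a_i > 0$ is essential — a negative $a_i$ would flip the corresponding inequality and turn minimization into maximization on that objective.

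There is no real obstacle here; the only thing to be careful about is matching the precise form of the Pareto dominance / Pareto optimality definitions given earlier in the excerpt (the ``dominates all other $x'$'' phrasing) and making sure the equivalence is stated for that exact notion, so that the substitution argument lands on the right target. I would therefore keep the proof short: one display establishing $f_i(x_a) \lessgtr f_i(x_b) \iff h_i(x_a) \lessgtr h_i(x_b)$ under $a_i>0$, one sentence concluding that dominance is preserved in both directions, and one sentence concluding equality of Pareto sets. I would then note that Lemma \ref{lemma:MOOequivalence} is applied to pass from Eq. \eqref{equation: upperbound_MOO} to Eq. \eqref{equation: simplifiedMOO} by absorbing the constants $\eta, E, c_{clip}$ (and the additive $\mathbf{O}(\eta^2 E^2 + \eta)$ term, which is constant in $T,\sigma,q$) into the roles of $a_i$ and $m_i$, and by noting that replacing $f_2$ by its square $\sqrt{qT}/\sigma \mapsto qT/\sigma^2$ — or keeping it as is — does not change dominance either, since $t \mapsto t^2$ is strictly increasing on the nonnegative reals where $f_2$ takes its values; if the paper needs that last point it would be a one-line extension of the same monotonicity argument.
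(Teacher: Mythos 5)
Your proposal is correct and follows essentially the same route as the paper's proof: write out the Pareto optimality (dominance) conditions, substitute $f_j = a_j h_j + m_j$, and cancel the strictly increasing affine map using $a_j > 0$ to obtain the equivalent conditions for $h_j$. Your additional remarks on why $m_i$ may be any real while $a_i$ must be positive, and on how the lemma is applied to pass from Eq.~\eqref{equation: upperbound_MOO} to Eq.~\eqref{equation: simplifiedMOO}, are accurate but not part of the paper's argument.
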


\begin{proof} 
Suppose that $x_0$ is a Pareto optimal solution of Eq. \eqref{equation: original}. We have Eq. \eqref{eq: paretosolution-definition} according to the definition of Pareto optimal solution.
    \begin{equation}
        \begin{split}
            \forall x_i \neq x_0,\  &  \forall\ j \in [1,2],\  f_j(x_0) \leq f_j(x_i)\\
            & \exists\ j \in [1,2],\  f_j(x_0) < f_j(x_i)
        \end{split}
        \label{eq: paretosolution-definition}
    \end{equation}
    \begin{equation}
        \begin{split}
            \iff & \forall x_i \neq x_0, \\
            & \forall\ j \in [1,2],\  a_j h_j(x_0) + m_j \leq a_j h_j(x_i) + m_j\\
            & \exists\ j \in [1,2],\  a_j h_j(x_0) + m_j < a_j h_j(x_i) + m_j \\
        \end{split}
    \end{equation}
    \begin{equation}
        \begin{split}
            \iff \forall x_i \neq x_0,\  &  \forall\ j \in [1,2],\  h_j(x_0) \leq h_j(x_i)\\
            & \exists\ j \in [1,2],\  h_j(x_0) < h_j(x_i)
        \end{split}
    \end{equation}
    \begin{equation}
        \begin{split}
            \iff \forall x_i \neq x_0,\  &  \forall\ j \in [1,2],\  f_j^{\prime}(x_0) \leq f_j^{\prime}(x_i)\\
            & \exists\ j \in [1,2],\  f_j^{\prime}(x_0) < f_j^{\prime}(x_i)
        \end{split}
    \end{equation}
    It has been proved that $x_0$ is still a Pareto optimal solution of optimization problem Eq. \eqref{equation: simplified}. The Pareto optimal solutions of Eq. \eqref{equation: original} and Eq. \eqref{equation: simplified} are equivalent.
\end{proof}

\subsection{Pareto Optimal Solutions} \label{sec: analytical solution}
In this section, we derive the analytical Pareto optimal solutions w.r.t $T, \sigma$ and $q$ in Thm. \ref{Thm: analytical solution} and the the analytical Pareto optimal solutions with pre-defined $q$ under different constraint cases in Cor. \ref{Cor: analytical solution with given q}.
\begin{theorem}
\XY{\textbf{(Analytical Solutions w.r.t.} $\mathbf{q, T, \sigma}$  \textbf{with unconstrained $\mathbf{\sigma}$ and $\mathbf{q}$)}}
Assuming $\sigma \in [0,+\infty) \text{and } q \in (0,1]$, we have the Pareto optimal solutions for Eq. \eqref{equation: simplifiedMOO} as follows.
    \begin{equation}
        k \sigma^2 T = q K,
        \label{eq: noconstraintsolution}
    \end{equation}
\XY{where $k, t_c$ is constant and $T \in [1, \dots, \lfloor\frac{\Bar{\epsilon_r}}{t_c}\rfloor]$}.
  \label{Thm: analytical solution}
\end{theorem}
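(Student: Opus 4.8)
The first thing I would do is notice that both objectives in Eq.~\eqref{equation: simplifiedMOO} depend on the pair $(\sigma,q)$ only through the single quantity $u := \sigma^2/(qK)$: indeed $f_1 = \tfrac1T + k u$ and $f_2 = \sqrt{qT}/\sigma = \sqrt{T/(Ku)}$. Since $\sigma\in[0,+\infty)$ and $q\in(0,1]$, the value $u$ ranges over all of $[0,+\infty)$, and every $u\ge 0$ is attainable by an admissible $(\sigma,q)$. The efficiency constraint $c_tT\le\bar{\epsilon}_e$ is just $1\le T\le T_{\max}$ with $T_{\max}=\lfloor\bar{\epsilon}_e/c_t\rfloor$ (this is the stated range for $T$), and the target relation $k\sigma^2T=qK$ is exactly $kuT=1$. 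So the claim reduces to: the Pareto optimal points of $\min\big(f_1(T,u),f_2(T,u)\big)$ with $f_1=\tfrac1T+ku$, $f_2=\sqrt{T/(Ku)}$, over $T\in[1,T_{\max}]$, $u\ge 0$, are those with $kuT=1$.

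\textbf{Step 2: $\epsilon$-constraint scalarization to trace the front.} I would then sweep the front by the $\epsilon$-constraint method: for each privacy level $v>0$, minimize $f_1$ subject to $f_2\le v$. Since $f_1$ is strictly increasing in $u$ and $f_2$ strictly decreasing in $u$, the constraint is active at the optimum, so $u=T/(Kv^2)$ and $f_1(T)=\tfrac1T+\tfrac{kT}{Kv^2}$, a strictly convex function of $T>0$ with unique unconstrained minimizer $T^\star=v\sqrt{K/k}$. Whenever $T^\star\in[1,T_{\max}]$, the minimizer is $(T^\star,u^\star)$ with $u^\star=1/(v\sqrt{Kk})$, and a one-line computation gives $ku^\star T^\star=1$, i.e.\ $k\sigma^2T=qK$. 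Letting $v$ range over $[\sqrt{k/K},\,T_{\max}\sqrt{k/K}]$ then sweeps out precisely the curve $\{kuT=1,\ T\in[1,T_{\max}]\}$; conversely, any Pareto optimal point solves the $\epsilon$-constraint problem at $v$ equal to its own $f_2$-value, hence lies on that curve.

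\textbf{Step 3: an independent check via AM--GM, plus the necessity perturbation.} As a cleaner alternative for sufficiency I would use the product of the objectives: writing $A=\tfrac1T$, $B=ku$, one gets $f_1f_2=(A+B)\sqrt{k/(ABK)}=\sqrt{k/K}\cdot\frac{A+B}{\sqrt{AB}}\ge 2\sqrt{k/K}$ for every feasible point, with equality iff $A=B$, i.e.\ iff $kuT=1$. Thus a point on $\{kuT=1\}$ attains the global minimum $2\sqrt{k/K}$ of $f_1f_2$, so nothing feasible can have both coordinates no larger with one strictly smaller (that would force a strictly smaller product); this also shows the Pareto front in objective space is the hyperbola arc $f_1f_2=2\sqrt{k/K}$. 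For necessity away from the endpoints, given a feasible point with $kuT\ne1$, I would perturb $T$ slightly (increase it if $kuT<1$, decrease it if $kuT>1$) while re-choosing $u$ so that $f_2$ is held fixed; the elementary identity $\Delta f_1=\frac{\Delta T}{T}\big(ku-\tfrac1{T'}\big)$ then shows $f_1$ strictly decreases, so the point is dominated.

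\textbf{Expected main obstacle.} The delicate point is the boundary of the $T$-interval: at $T=1$ or $T=T_{\max}$ the perturbation in Step~3 is infeasible in one direction, and in fact genuine ``corner'' Pareto optima appear there (with $k\sigma^2\cdot1\ge qK$ at $T=1$, or $k\sigma^2T_{\max}\le qK$ at $T=T_{\max}$), which correspond exactly to $v$ falling outside $[\sqrt{k/K},\,T_{\max}\sqrt{k/K}]$ in Step~2; these cases must be stated/handled separately and are dissected further, with $\sigma$ additionally bounded, in Cor.~\ref{Cor: analytical solution with given q}. A secondary subtlety is that if $T$ is kept integer-valued rather than continuous, the perturbation must use $\Delta T=\pm1$, leaving a tiny integrality gap around $kuT=1$ that has to be absorbed or waived by relaxing $T$; the remaining steps are routine AM--GM and single-variable calculus.
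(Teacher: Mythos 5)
Your proof is correct and, at its core, rests on the same mechanism as the paper's: collapse $(\sigma,q)$ into the single ratio $\sigma^2/q$ (the paper's Remark \ref{remark} does exactly this) and apply AM--GM to the two terms of $f_1$ to locate the balance point $k\sigma^2 T=qK$. The packaging differs, though. The paper fixes the utility value $X=f_1$ and minimizes $f_2$ over $\sigma^2/q$ by AM--GM; you run the dual $\epsilon$-constraint sweep (fix $f_2\le v$, minimize $f_1$ over $T$), and in addition give the product bound $f_1f_2\ge 2\sqrt{k/K}$ with equality iff $k\sigma^2T=qK$. That product identity buys a one-line sufficiency proof (a minimizer of the product of two positive objectives cannot be dominated) and identifies the front in objective space as the hyperbola arc $f_1f_2=2\sqrt{k/K}$; your explicit perturbation argument supplies the necessity direction that the paper handles only implicitly (``each point in $S$ non-dominates each other''). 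One point in your favour: the ``main obstacle'' you flag is a genuine gap in the paper's own proof of Theorem \ref{Thm: analytical solution} --- for $X<2/T_{max}$ the AM--GM minimizer requires $T=2/X>T_{max}$, which is infeasible, so the boundary $T=T_{max}$ contributes Pareto optima with $k\sigma^2T_{max}<qK$ that violate the stated identity; the paper only acknowledges this later, in the three-case analysis of Theorem \ref{Cor: analytical solution with given q}. The integrality of $T$ is likewise ignored there. So your proposal is not only consistent with the paper's argument but somewhat more careful about where the clean identity actually holds.
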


\begin{proof}\label{proof: analytical solution}
    Set $X = \frac{1}{T} + \frac{k}{K}\frac{\sigma^2}{q}$ with constant $k$. The \XY{bi-objective optimization objective functions $f_1$ and $f_2$ in Eq. \eqref{equation: simplifiedMOO} is converted to the following form:}
    \begin{equation}
        \begin{aligned}
            & f_1(X, \sigma, q) = X, \\
            & f_2(X, \sigma, q) = \sqrt{\frac{q}{\sigma^2(X-k \frac{\sigma^2}{qK})}}, \\
        \end{aligned}
        \label{eq:X}
    \end{equation}
    \XY{where $X \in (0, +\infty)$, $q \in (0,+\infty)$, $\sigma \in (0,+\infty)$.}
 
    For a specific $X$, $f_2$ reaches minimum $\frac{2\sqrt{k}}{\sqrt{K}X}$ when $\frac{\sigma^2}{q} = \frac{KX}{2k}$ according to inequality of arithmetic means. Specifically, 
    \begin{equation} 
        \begin{aligned}
        f_2(\sigma^2, q|X) & =\sqrt{\frac{1}{\frac{\sigma^2}{q}(X-\frac{k}{K}\frac{\sigma^2}{q})}} \\
        & \geq f_2(\frac{\sigma^2}{q}=\frac{K X}{2k}|X) = \frac{2\sqrt{k}}{\sqrt{K}X}. \\
        \end{aligned}
    \end{equation} 
    
    Therefore, 
    given $X$, $(X, \frac{2\sqrt{k}}{\sqrt{K}X})$ Pareto dominates the set $\{(X, \sqrt{\frac{q}{\sigma^2(X-k\sigma^2/(qK)}})| \frac{\sigma^2}{q} \neq \frac{KX}{2k}, \forall \sigma,q \}$. 
    As a result, the Pareto set of Eq. \eqref{equation: simplifiedMOO} must be a subset of $\{(X, \frac{2\sqrt{k}}{\sqrt{K}X}), \forall X \} \triangleq S$.
    Since each points in $S$ non-dominates each other, the set $S$ is the Pareto set of Eq. \eqref{equation: simplifiedMOO}.

    Moreover, by substituting $X$ via Eq. $\frac{\sigma^2}{q} = \frac{K X}{2k}$, we have proven that the Pareto optimal solutions follows:
    \begin{equation}
        \begin{aligned}
            \frac{\sigma^2}{q} = \frac{KX}{2k} & \iff \frac{\sigma^2}{q} = \frac{K(\frac{1}{T}+k\frac{\sigma^2}{qK})}{2k} \\
            & \iff k \sigma^2 T= q K. \\
        \end{aligned}
        \label{eq:optimalrelationship}
    \end{equation}
    \XY{where $k, t_c$ is constant and $T \in [1, \dots, \lfloor\frac{\Bar{\epsilon_r}}{t_c}\rfloor]$}.

    We have finished the proof. 
    \label{proof:analytical solution}
\end{proof}

\begin{figure*}[ht]
    \centering
    \subfigure[\XY{Case-I ($\sigma \in [0,+\infty)$)}]{
        \includegraphics[scale=0.5]{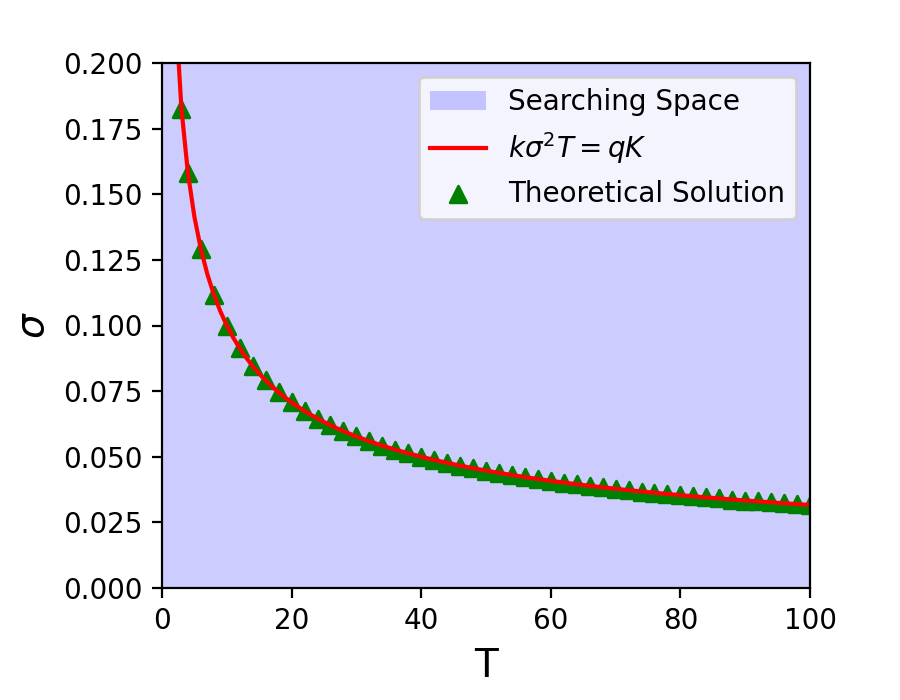} 
        \label{Fig:Analytical Solution-A}
    }
    \subfigure[\XY{Case-II (Constrained $\sigma$ with $k \sigma_{max}^2 \lfloor\frac{\Bar{\epsilon_r}}{t_c}\rfloor > {qK}$)}]{
        \includegraphics[scale=0.5]{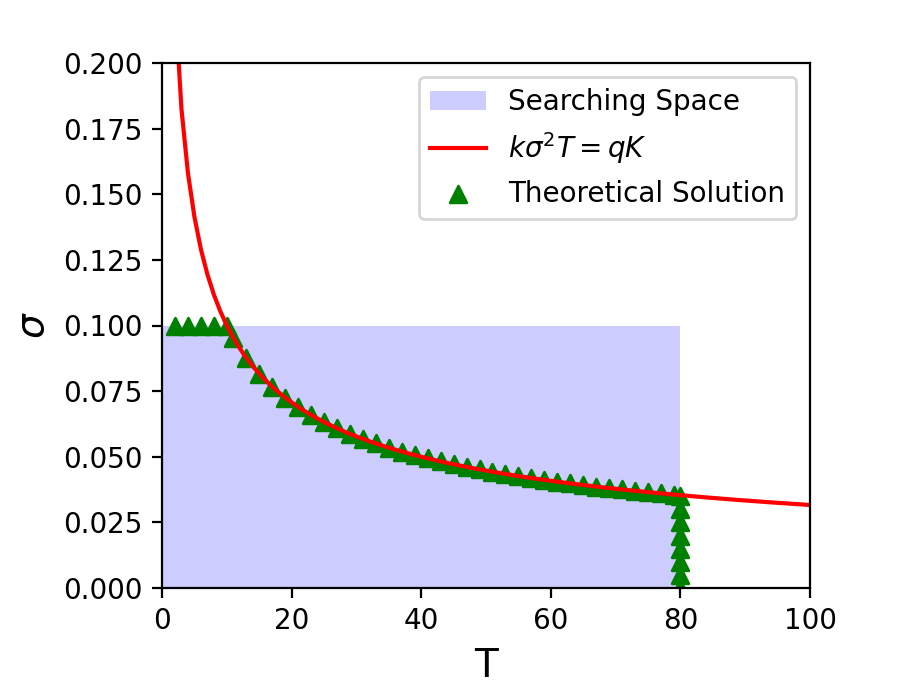} 
        \label{Fig:Analytical Solution-B}
    }
    \subfigure[\XY{Case-III (Constrained $\sigma$ with $k \sigma_{max}^2 \lfloor\frac{\Bar{\epsilon_r}}{t_c}\rfloor \leq qK$)}]{
        \includegraphics[scale=0.5]{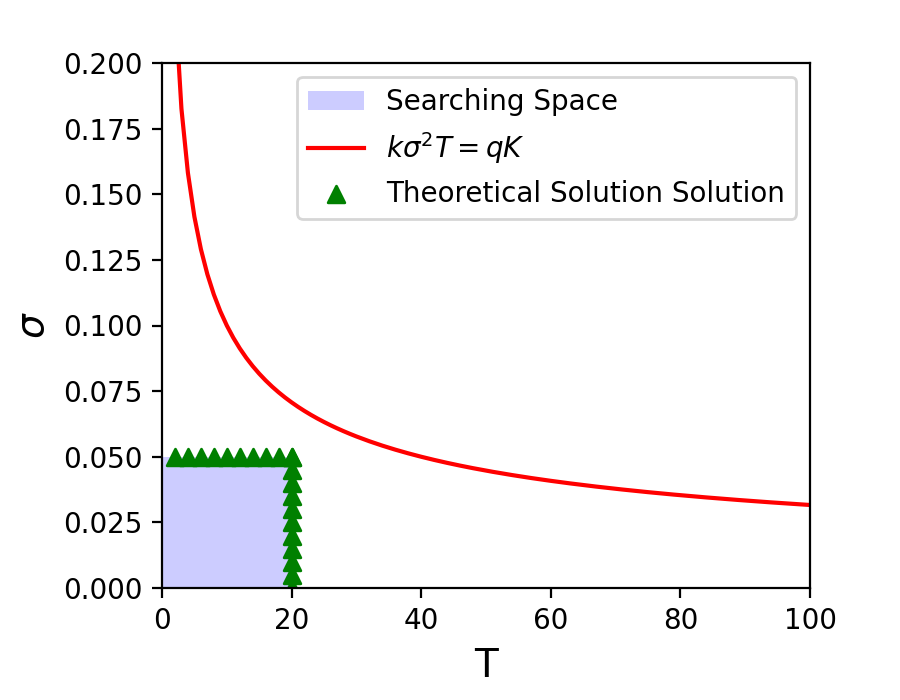}
        \label{Fig:Analytical Solution-C}
    }
    \DeclareGraphicsExtensions.
    \caption{\XY{Theoretical Pareto Optimal Solutions}. $T$ is on x-axis and $\sigma$ is on y-axis. The blue area is the decision space with constraint. The red line represents the inverse proportional relationship between $\sigma^2$ and $T$, and the green triangle represents the analytical solution.}
    \label{Fig:Analytical Solution}
\end{figure*}

\begin{remark}\label{remark}
    In Proof \ref{proof: analytical solution}, the \XY{bi-objective optimization objective functions} as Eq. \eqref{eq:X} can be written w.r.t. $\frac{\sigma^2}{q}$ and $T$ as follows:
    \begin{equation}
        \begin{aligned}
            & f_1(X, \frac{\sigma^2}{q}) = X,\\
            & f_2(X, \frac{\sigma^2}{q}) = \sqrt{\frac{1}{\frac{\sigma^2}{q}(X-\frac{k}{K} \frac{\sigma^2}{q})}}.\\
        \end{aligned}
    \end{equation}
    where $X = \frac{1}{T} + \frac{k}{K}\frac{\sigma^2}{q}$.
    
    The Pareto optimal solutions as Eq. \eqref{eq:optimalrelationship} can also be written w.r.t. $\frac{\sigma^2}{q}$ and $T$ as follows:
    \begin{equation}
        \frac{\sigma^2}{q} T=\frac{K}{k}.
    \end{equation}
    
    Therefore, the fraction $\frac{\sigma^2}{q}$ can be regraded as a single parameter in Proof \ref{proof:analytical solution}. Based on that, keeping one of $\sigma$, $q$ as a constant and iterating over the other one can still achieve the whole Pareto front \XY{with unconstrained $\sigma$ and $q$}. 
\end{remark}

In the real scenario, the sample ratio ($q$) is usually decided by the server and distributed to the clients. Based on this common setting, we keep sample ratio ($q$) as a constant and further provides the analytical solutions w.r.t. $\sigma$ and $T$ as Theorem \ref{Cor: analytical solution with given q} under different cases. We also demonstrate the different cases analytical solutions in Fig. \ref{Fig:Analytical Solution}. 

\begin{theorem} 
        With a pre-defined sample ratio ($q$), the Pareto solution for Eq. \eqref{equation: simplifiedMOO} is as follows in the three cases:

    \begin{itemize}
        \item \XY{Case-I (Unconstrained $\sigma$)}. Assuming $\sigma \in [0,+\infty)$, we have the Pareto optimal solution as follows:
            \begin{equation}
                k \sigma^2 T = {qK},\ k \text{ is constant},
            \end{equation}
            \XY{where $k, t_c, q$ is constant and $T \in [1, \dots, \lfloor\frac{\Bar{\epsilon_r}}{t_c}\rfloor]$}.

        \item \XY{Case-II (Constrained $\sigma$ with $k \sigma_{max}^2 \lfloor\frac{\Bar{\epsilon_r}}{t_c}\rfloor > {qK}$)}. \XY{Assuming $\sigma \in [0,\sigma_{max}]$ and $k \sigma_{max}^2 \lfloor\frac{\Bar{\epsilon_r}}{t_c}\rfloor > {qK}$}, we have the Pareto optimal solutions as follows:
            \begin{equation}
                \left\{
                \begin{aligned}
                    & \sigma = \sigma_{max}\ \ when\  T \in \{1,\dots,\lfloor \frac{qK}{k\sigma_{max}^2} \rfloor\}\\
                    & \sigma = \sqrt{\frac{qK}{kT}}\ \ when\ T \in \{\lceil \frac{qK}{k\sigma_{max}^2} \rceil, \dots, \lfloor\frac{\Bar{\epsilon_r}}{t_c}\rfloor-1\}\\
                    & \sigma = [0,\sqrt{\frac{qK}{k \lfloor\frac{\Bar{\epsilon_r}}{t_c}\rfloor}}]\ \ when\ T = \lfloor\frac{\Bar{\epsilon_r}}{t_c}\rfloor. \\
                \end{aligned}
                \right.
                \label{widerangesolution}
            \end{equation}

        \item \XY{Case-III (Constrained $\sigma$ with $k \sigma_{max}^2 \lfloor\frac{\Bar{\epsilon_r}}{t_c}\rfloor \leq qK$)}. Assuming $\sigma \in [0,\sigma_{max}]$ and $k \sigma_{max}^2 \lfloor\frac{\Bar{\epsilon_r}}{t_c}\rfloor \leq qK$, the Pareto optimal solutions are as follows.
            \begin{equation}
                \left\{
                \begin{aligned}
                    & \sigma = \sigma_{max}\ \ when\  T \in \{1,2,\dots,\lfloor\frac{\Bar{\epsilon_r}}{t_c}\rfloor-1\} \\
                    & \sigma = [0,\sigma_{max}]\ \ when\ T = \lfloor\frac{\Bar{\epsilon_r}}{t_c}\rfloor \\
                \end{aligned}
                \right.
                \label{smallrangesolution}
            \end{equation}
    \end{itemize}
    \label{Cor: analytical solution with given q}
\end{theorem}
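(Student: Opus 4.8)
The plan is to reduce the case analysis of Theorem~\ref{Cor: analytical solution with given q} to the unconstrained result of Theorem~\ref{Thm: analytical solution} by carefully tracking how the efficiency constraint $c_t T \leq \Bar{\epsilon_e}$ (equivalently $T \leq \lfloor \Bar{\epsilon_r}/t_c \rfloor =: T_{max}$) and the box constraint $\sigma \in [0,\sigma_{max}]$ cut into the decision space. With $q$ fixed, the simplified objectives from Eq.~\eqref{equation: simplifiedMOO} are $f_1(T,\sigma) = \tfrac{1}{T} + \tfrac{k\sigma^2}{qK}$ and $f_2(T,\sigma) = \tfrac{\sqrt{qT}}{\sigma}$. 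The key structural observation, borrowed from the proof of Theorem~\ref{Thm: analytical solution}, is that for any \emph{fixed} value of $X = f_1$, the privacy objective $f_2$ is minimized along the curve $k\sigma^2 T = qK$; away from that curve one can always decrease $f_2$ while holding $f_1$ constant, so no such point can be Pareto optimal. Thus the Pareto set always lies on the intersection of the feasible region with the hyperbola $\sigma^2 = qK/(kT)$, \emph{plus} possibly some boundary pieces where that hyperbola exits the feasible box.

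First, for Case~I, I would note the constraint $\sigma \in [0,+\infty)$ is vacuous, so the only active constraint is $T \le T_{max}$; the argument of Theorem~\ref{Thm: analytical solution} applies verbatim and yields $k\sigma^2 T = qK$ for $T \in \{1,\dots,T_{max}\}$. Second, for Cases~II and III, I would partition the range $T \in \{1,\dots,T_{max}\}$ according to whether the point $\big(T, \sqrt{qK/(kT)}\big)$ on the ideal hyperbola is feasible, i.e.\ whether $\sqrt{qK/(kT)} \le \sigma_{max}$, which rearranges to $T \ge qK/(k\sigma_{max}^2)$. When $T$ is too small ($T < qK/(k\sigma_{max}^2)$) the hyperbola lies above the box, so along the vertical line of that fixed $T$ the feasible values of $\sigma$ are $[0,\sigma_{max}]$, and since $f_2$ is decreasing in $\sigma$ while $f_1$ is increasing in $\sigma$, the Pareto-optimal choice is $\sigma = \sigma_{max}$; this gives the first branch of Eq.~\eqref{widerangesolution}. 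When $T$ is large enough and strictly less than $T_{max}$, the hyperbola point is interior to the box, so the ideal relation $\sigma = \sqrt{qK/(kT)}$ is attained — the middle branch. The subtle endpoint is $T = T_{max}$: here one cannot increase $T$ further to trade off, so the entire vertical segment below the hyperbola point, $\sigma \in [0, \sqrt{qK/(kT_{max})}]$, is Pareto optimal, because decreasing $\sigma$ from that value decreases $f_1$ but increases $f_2$, with no feasible point dominating. Case~III is the degenerate sub-case where $k\sigma_{max}^2 T_{max} \le qK$ forces the hyperbola above the box for every $T \le T_{max}$; then $\sigma = \sigma_{max}$ is optimal for all $T < T_{max}$, and at $T = T_{max}$ the whole segment $\sigma \in [0,\sigma_{max}]$ is Pareto optimal, giving Eq.~\eqref{smallrangesolution}.

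To make each branch rigorous I would argue in two directions for every claimed Pareto point $p^*$: (i) \emph{feasibility and non-domination from within} — check $p^*$ satisfies both constraints and that no other feasible point dominates it, using monotonicity of $f_1$ in $T$ (decreasing) and in $\sigma$ (increasing) together with the fixed-$X$ minimization lemma; and (ii) \emph{domination of everything else} — show any feasible point not of the claimed form is dominated by some point that is, again via the fixed-$X$ argument to push onto the hyperbola, then moving along the hyperbola (which is feasible until it exits the box) or along a boundary edge toward $p^*$. The monotonicity facts and the arithmetic-mean inequality step are exactly the ones already used in Proof~\ref{proof: analytical solution}, so no new machinery is needed.

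The main obstacle is the careful bookkeeping at the boundary $T = T_{max}$ and the floor/ceiling index arithmetic separating the three $T$-subintervals in Eq.~\eqref{widerangesolution}: one must verify that $\lfloor qK/(k\sigma_{max}^2)\rfloor$ and $\lceil qK/(k\sigma_{max}^2)\rceil$ correctly delimit the switch from the $\sigma = \sigma_{max}$ branch to the hyperbola branch (with attention to whether $qK/(k\sigma_{max}^2)$ is an integer), and that at $T_{max}$ the set of Pareto-optimal $\sigma$ is a closed segment rather than a single point — this last fact is what distinguishes the constrained problem from the unconstrained one and is easy to miss. Everything else is a routine consequence of the monotonicity of the two objectives and the fixed-$X$ reduction already established.
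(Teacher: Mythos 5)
Your proposal is correct and follows essentially the same route as the paper: both rest on the fixed-$X$ reduction (minimizing $f_2$ at each level $X$ of $f_1$ via the AM--GM inequality) together with the monotonicity of $f_2$ once the unconstrained minimizer exits the feasible box, the only organizational difference being that you partition the decision space by $T$ while the paper partitions by the value of $X$. One caution: your sentence justifying $\sigma=\sigma_{max}$ for small $T$ by monotonicity along the fixed-$T$ vertical line is not by itself a valid Pareto argument (it would equally ``justify'' every $\sigma$ on that line); the comparison must be carried out at fixed $X$, i.e., along the level curve of $f_1$ where $T$ varies with $\sigma$, exactly as in your own rigorization plan and in the paper's proof.
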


\begin{proof}
    Set $X = \frac{1}{T} + \frac{k}{K}\frac{\sigma^2}{q}$ with constant $k$ and $q$. The \XY{bi-objective optimization objective functions $f_1$ and $f_2$ in Eq. \eqref{equation: simplifiedMOO} is converted to the following form Eq. \eqref{equation: changevariable} and Eq. \eqref{equation: changevariable2}:}
    
    \begin{equation}
        \begin{aligned}
            & f_1(X, \sigma) = X\\
            & f_2(X, \sigma) = \sqrt{\frac{q}{\sigma^2(X-k\frac{\sigma^2}{qK})}}\\
        \end{aligned}
        \label{equation: changevariable}
    \end{equation}
    
    \begin{equation}
        \begin{aligned}
            & f_1(X, T) = X\\
            & f_2(X, T) = \sqrt{\frac{kT^2}{K(XT-1)}}\\
        \end{aligned}
        \label{equation: changevariable2}
    \end{equation}

    \XY{Set $T_{max}=\lfloor\frac{\Bar{\epsilon_r}}{t_c}\rfloor$. The constraint as $f_3(T,\sigma,q) = c_t T \leq \Bar{\epsilon_e}$ in Eq. \ref{equation: simplifiedMOO} can be converted to the form:
    \begin{equation}
        \begin{aligned}
            f_3(T,\sigma,q) & = c_t T \leq \Bar{\epsilon_e}, T \in \mathcal{Z}^{+} \iff T \leq \frac{\Bar{\epsilon_e}}{c_t}, T \in \mathcal{Z}^{+} \\
            & \iff T \in [1,2,\dots,T_{max}]. \\
        \end{aligned}
    \end{equation}
    }

    \textbf{Case-I (Unconstrained $\mathbf{\sigma}$)}
    \XY{In Case-I, we have $\sigma \in [0,+\infty)$, given $q$, and $T \in [1,2,\dots, T_{max}]$.} For a specific $X$, $f_2$ reaches minimum value $\frac{2\sqrt{k}}{\sqrt{K}X}$ when $\sigma^2 = \frac{qKX}{2k}$ according to inequality of arithmetic means. Specifically,
    \begin{equation} 
        \begin{aligned}
        f_2(\sigma^2|X) & =\sqrt{\frac{q}{\sigma^2(X-k \frac{\sigma^2}{qK})}} \\
        & \geq f_2(\sigma^2=\frac{q K X}{2k}|X) = \frac{2\sqrt{k}}{\sqrt{K}X}. \\
        \end{aligned}
    \end{equation} 
    
    Therefore, given $X$, $(X, \frac{2\sqrt{k}}{\sqrt{K}X})$ Pareto dominates the set $\{(X, \sqrt{\frac{q}{\sigma^2(X-k\sigma^2/(qK))}})| \sigma^2 \neq \frac{qKX}{2k}, \forall \sigma \}$.
    As a result, the Pareto set of Eq. \eqref{equation: simplifiedMOO} must be a subset of $\{(X, \frac{2\sqrt{k}}{\sqrt{K}X}), \forall X \} \triangleq S$.
    Since each points in $S$ non-dominates each other, the set $S$ is the Pareto set of Eq. \eqref{equation: simplifiedMOO}.

    Moreover, by substituting $X$ via Eq. $\sigma^2 = \frac{q K X}{2k}$, we have proven that the Pareto solution follows:
    \begin{equation}
        \begin{aligned}
            \sigma^2 = \frac{qKX}{2k} & \iff \sigma^2 = \frac{qK(\frac{1}{T}+k\frac{\sigma^2}{qK})}{2k} \\
            & \iff k \sigma^2 T= q K. \\
        \end{aligned}
    \end{equation}
    
     \textbf{Case-II (Constrained $\mathbf{\sigma}$ with $\mathbf{k \sigma_{max}^2 \lfloor\frac{\Bar{\epsilon_r}}{t_c}\rfloor > {qK}}$)}
     \XY{As $T_{max}=\lfloor\frac{\Bar{\epsilon_r}}{t_c}\rfloor$, we have $T \in [1,\dots,T_{max}]$ and $k \sigma_{max}^2 T_{max} > {qK}$ with $\sigma \in [0,\sigma_{max}]$ and given $q$}. We separate the value of $X$ into three parts.
    
    \begin{itemize}
        \item Given $X \in [\frac{1}{T_{max}},\frac{2}{T_{max}})$, we have $\frac{X}{2} < \frac{1}{T_{max}} \leq  \frac{1}{T}, \forall T \in [1,2,\dots, T_{max}]$. $f_2$ monotonically decrease when $T < \frac{2}{X}$. 
            \begin{equation}
                \begin{aligned}
                    f_2(T|X) = \sqrt{\frac{kT^2}{K(XT-1)}} & \geq f_2(T=T_{max}|X) \\
                    & = \sqrt{\frac{kT_{max}^2}{K(XT_{max}-1)}}\\
                \end{aligned}
            \end{equation}
            Therefore, given $X \in [\frac{1}{T_{max}},\frac{2}{T_{max}})$, $(X, \sqrt{\frac{kT_{max}^2}{K(XT_{max}-1)}})$ Pareto dominates the set $\{(X, \sqrt{\frac{kT^2}{K(XT-1)}})| T \neq T_{max},\  \forall T \} \triangleq S_1$.
            
        \item Given $X \in [\frac{2}{T_{max}}, \frac{2k\sigma_{max}^2}{q}]$, $f_2$ reaches minimum $\frac{2\sqrt{k}}{\sqrt{K}X}$ when $\sigma^2 = \frac{qKX}{2k}$ according to inequality of arithmetic means.
            \begin{equation} 
                \begin{aligned}
                f_2(\sigma^2|X) & =\sqrt{\frac{q}{\sigma^2(X-k \frac{\sigma^2}{qK})}} \\
                & \geq f_2(\sigma^2=\frac{q K X}{2k}|X) = \frac{2\sqrt{k}}{\sqrt{K}X}. \\
                \end{aligned}
            \end{equation} 
            Therefore, given $X \in [\frac{2}{T_{max}}, \frac{2k\sigma_{max}^2}{q}]$,
            $(X, \frac{2\sqrt{k}}{\sqrt{K}X})$ Pareto dominates the set $\{(X, \sqrt{\frac{q}{\sigma^2(X-k\sigma^2/(qK))}})| \sigma^2 \neq \frac{qKX}{2k}, \forall \sigma \} \triangleq S_2$.
            
        \item Given $X \in (\frac{2k\sigma_{max}^2}{qK}, 1+\frac{k\sigma_{max}^2}{qK}]$, we have $\frac{qKX}{2k} > \sigma_{max}^2 \geq \sigma^2$.  $f_2$ monotonically decrease when $\sigma^2 < \frac{qKX}{2k}$
            \begin{equation}
                \begin{aligned}
                    f_2(\sigma^2|X) =\sqrt{\frac{q}{\sigma^2 (X-k\sigma^2/(qK))}} \geq f_2(\sigma=\sigma_{max}|X)& \\
                    = \sqrt{\frac{q}{\sigma_{max}^2(X-k\sigma_{max}^2/(qK))}}& \\
                \end{aligned}
            \end{equation}
            Therefore, given $X \in (\frac{2k\sigma_{max}^2}{qK}, 1+\frac{k\sigma_{max}^2}{qK}]$, $(X, \sqrt{\frac{q}{\sigma_{max}^2(X-k\sigma_{max}^2/(qK))}})$ Pareto dominates the set $\{(X, \sqrt{\frac{q}{\sigma^2(X-k\sigma^2/(qK))}})| \sigma^2 \neq \sigma_{max}, \forall \sigma \} \triangleq S_3$.
    \end{itemize}
    
    As a result, the Pareto set of Eq. \eqref{equation: simplifiedMOO} must be a subset of $S_1 \cup S_2 \cup S_3 \triangleq S$.
    Since each points in $S$ non-dominates each other, the set $S$ is the Pareto set of Eq. \eqref{equation: simplifiedMOO}. We derive the Pareto solution set based on the Pareto front in the following.
    \begin{itemize}
        \item For $X \in [\frac{1}{T_{max}},\frac{2}{T_{max}})$, it reaches Pareto front when $T=T_{max}$. We have the following Pareto solutions.
            \begin{equation}
                \forall \sigma \in [0,\sqrt{\frac{qK}{k T_{max}}}], T=T_{max}
            \end{equation}
        \item For $X \in [\frac{2}{T_{max}}, \frac{2k\sigma_{max}^2}{qK}]$, it reaches Pareto front when $\sigma=\sqrt{\frac{qK X}{2k}}$. As $\sigma=\sqrt{\frac{q K X}{2k}}$ is equivalent to $\sigma^2 T= \frac{qK}{k}$, we have the following Pareto solutions.
            \begin{equation}
                \sigma = \sqrt{\frac{qK}{kT}}\ \ when\ T \in \{\lceil \frac{qK}{k\sigma_{max}^2} \rceil, \dots, T_{max}-1\}
            \end{equation}
        \item For $X \in (\frac{2k\sigma_{max}^2}{qK}, 1+\frac{k\sigma_{max}^2}{qK}]$, it reaches Pareto front when $\sigma=\sigma_{max}$. We have the following Pareto solutions.
            \begin{equation}
                \sigma = \sigma_{max}\ \ when\  T \in \{1,\dots,\lfloor \frac{qK}{k\sigma_{max}^2} \rfloor\}
            \end{equation}
    \end{itemize}

    \XY{As $T_{max} = \lfloor\frac{\Bar{\epsilon_r}}{t_c}\rfloor$, we have done the proof of Case-II.}

    \textbf{Case-III (Constrained $\mathbf{\sigma}$ with $\mathbf{k \sigma_{max}^2 \lfloor\frac{\Bar{\epsilon_r}}{t_c}\rfloor \leq qK}$)} \XY{As $T_{max}=\lfloor\frac{\Bar{\epsilon_r}}{t_c}\rfloor$, we have $T \in [1,\dots,T_{max}]$ and $k \sigma_{max}^2 T_{max} \leq {qK}$ with $\sigma \in [0,\sigma_{max}]$ and given $q$} We separate the value of $X$ into two parts.

    \begin{itemize}
        \item Given $X \in [\frac{1}{T_{max}},\frac{1}{T_{max}}+\frac{k \sigma_{max}^2}{qK})$, we have 
            \begin{equation}
                X < \frac{1}{T_{max}}+\frac{k \sigma_{max}^2}{qK} < \frac{1}{T_{max}}+\frac{k}{qK}\frac{qK}{kT_{max}},
            \end{equation}
            which means $T \leq T_{max} < \frac{2}{X}$. $f_2$ monotonically decrease when $T < \frac{2}{X}$. 
            \begin{equation}
                \begin{aligned}
                    f_2(T|X) = \sqrt{\frac{kT^2}{K(XT-1)}} & \geq f_2(T=T_{max}|X) \\
                    & = \sqrt{\frac{kT_{max}^2}{K(XT_{max}-1)}}\\
                \end{aligned}
            \end{equation}
            Therefore, given $X \in [\frac{1}{T_{max}},\frac{2}{T_{max}})$, $(X, \sqrt{\frac{kT_{max}^2}{K(XT_{max}-1)}})$ Pareto dominates the set $\{(X, \sqrt{\frac{kT^2}{K(XT-1)}})| T \neq T_{max}, \forall T \} \triangleq S_1$.

        \item Given $X \in [\frac{1}{T_{max}}+\frac{k \sigma_{max}^2}{qK}, 1+\frac{k \sigma_{max}^2}{qK}]$, we have 
            \begin{equation}
                X \geq \frac{1}{T_{max}}+\frac{k \sigma_{max}^2}{q} > \frac{k \sigma_{max}^2}{q} + \frac{k \sigma_{max}^2}{q},
            \end{equation}
            which means $\sigma^2 < \frac{qKX}{2k}$.
            $f_2$ monotonically decrease when $\sigma^2 < \frac{qKX}{2k}$
            \begin{equation}
                \begin{aligned}
                    f_2(\sigma^2|X) =\sqrt{\frac{q}{\sigma^2 (X-k\sigma^2/(qK))}} \geq f_2(\sigma=\sigma_{max}|X)& \\
                    = \sqrt{\frac{q}{\sigma_{max}^2(X-k\sigma_{max}^2/(qK))}}& \\
                \end{aligned}
            \end{equation}
            Therefore, given $X \in [\frac{1}{T_{max}}+\frac{k \sigma_{max}^2}{qK}, 1+\frac{k \sigma_{max}^2}{qK}]$, $(X, \sqrt{\frac{q}{\sigma_{max}^2(X-k\sigma_{max}^2/(qK))}})$ Pareto dominates the set $\{(X, \sqrt{\frac{q}{\sigma^2(X-k\sigma^2/(qK))}})| \sigma^2 \neq \sigma_{max}, \forall \sigma \} \triangleq S_2$.
    \end{itemize}
    
    As a result, the Pareto set of Eq. \eqref{equation: simplifiedMOO} must be a subset of $S_1 \cup S_2 \triangleq S$.
    Since each points in $S$ non-dominates each other, the set $S$ is the Pareto set of Eq. \eqref{equation: simplifiedMOO}. We derive the Pareto solution set based on the Pareto front in the following.
    
    \begin{itemize}
        \item For $X \in [\frac{1}{T_{max}},\frac{1}{T_{max}}+\frac{k \sigma_{max}^2}{qK})$, it reaches Pareto front when $T=T_{max}$. We have the following Pareto solutions.
            \begin{equation}
                \forall \sigma \in [0,\sigma_{max}],\ T = T_{max}
            \end{equation}
        \item For $X \in [\frac{1}{T_{max}}+\frac{k \sigma_{max}^2}{qK}, 1+\frac{k \sigma_{max}^2}{qK}]$, it reaches Pareto front when $\sigma = \sigma_{max}$. We have the following Pareto solutions.
            \begin{equation}
                \sigma = \sigma_{max},\ when\ T \in \{1,2,\dots,T_{max}-1\}
            \end{equation}
    \end{itemize}
    
    \XY{As $T_{max} = \lfloor\frac{\Bar{\epsilon_r}}{t_c}\rfloor$, we have done the proof of Case-III.}

    \XY{We have done the proof of the relationship between Pareto optimal $\sigma$, $T$, and $q$ of all the three cases. The theoretical analysis provides a theoretical guarantee for guiding optimal parameter design with low cost in DPFL.}
\end{proof}

\section{Experiment} \label{Section: experiment}

\begin{figure*}[ht]
\vspace{-10pt}
    \centering
    \subfigure[Pareto Solution for LR]{
        \includegraphics[scale=0.45]{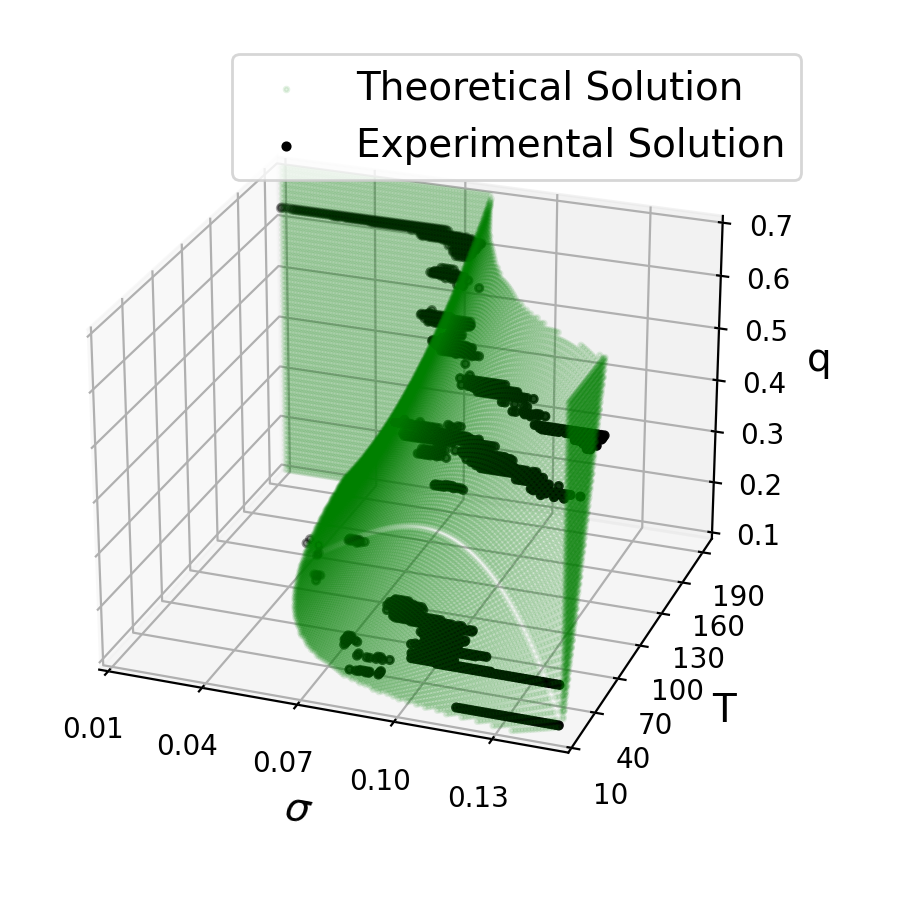} 
        \label{Fig: compare LR}
    }
    \subfigure[Pareto Solution for LeNet]{
        \includegraphics[scale=0.45]{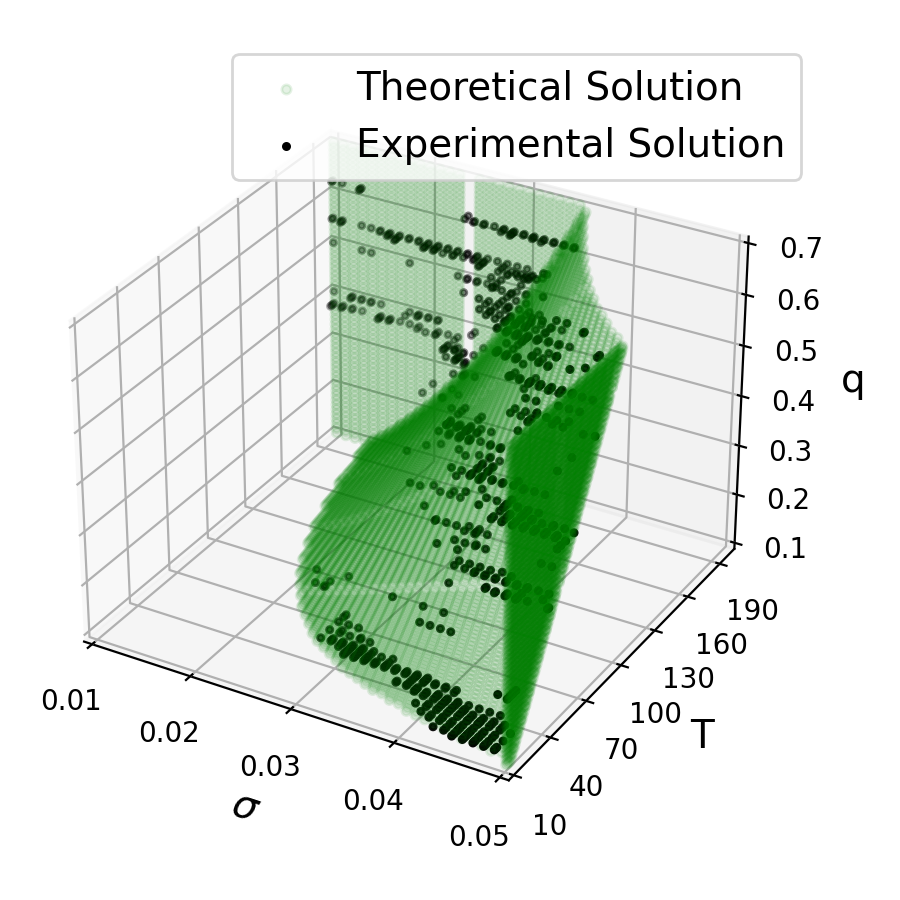} 
        \label{Fig: compare LeNet}
    }
    \subfigure[Pareto Solution for ResNet-18]{
        \includegraphics[scale=0.45]{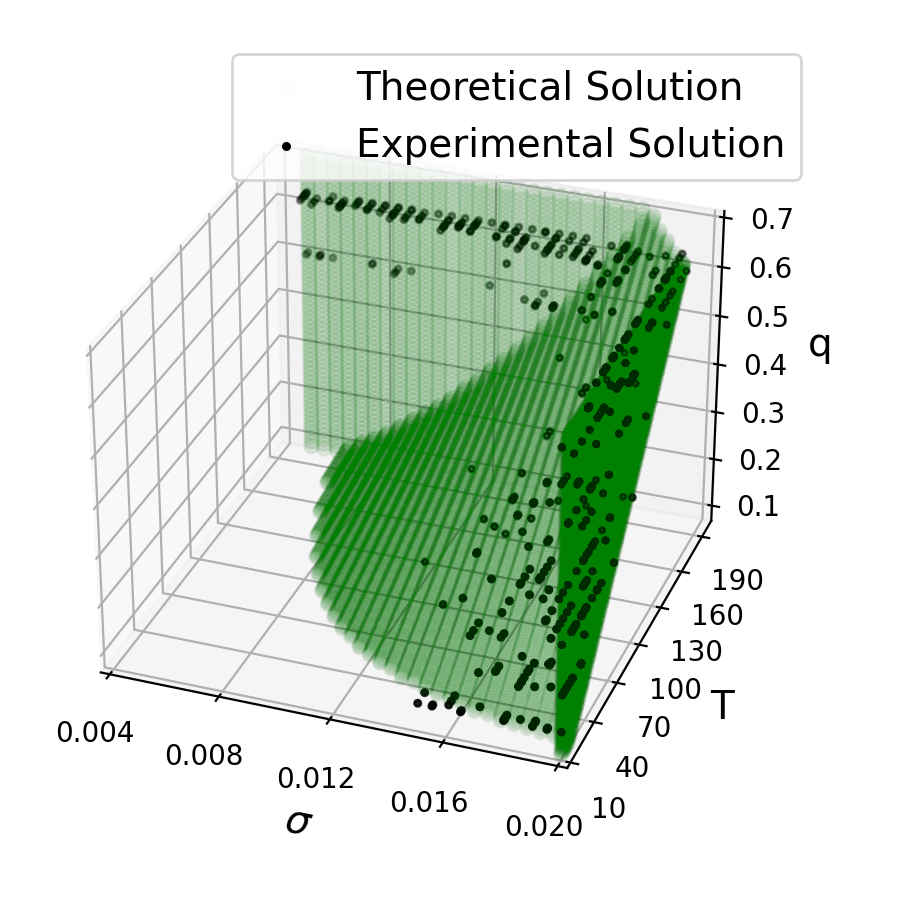} 
        \label{Fig: compare ResNet}
    }
    
    \caption{\textbf{Pareto Front and Pareto Solution for LeNet and LR with respect to $\sigma$, $T$, and $q$.} 
    With total $40$ clients and $20$ local epochs in LR model, Fig. \ref{Fig: compare LR} compares the theoretical solutions with the experimental solutions. The green dots show the theoretical solutions with constant $k=22$ which means $22 \sigma^2 T = 40 q$. The black dots show the experimental solutions by Algo. \ref{algorithm: non-dominated sorting} with $q \in [0.125, 0.2, 0.375, 0.4, 0.45, 0.55, 0.625]$.
    With total $40$ clients and $20$ local epochs in LeNet model, Fig. \ref{Fig: compare LeNet} compares the theoretical solutions with the experimental solutions. The green dots show the theoretical solutions with constant $k=110$ which means $110 \sigma^2 T = 40 q$. The black dots show the experimental solutions by Algo. \ref{algorithm: non-dominated sorting} with $q \in [0.125, 0.2, 0.25, 0.3, 0.375, 0.4, 0.5, 0.55, 0.625]$.
    With total $40$ clients and $20$ local epochs in ResNet-18 model, Fig. \ref{Fig: compare ResNet} compares the theoretical solutions with the experimental solutions. The green dots show the theoretical solutions with constant $k=440$ which means $440 \sigma^2 T = 40 q$. The black dots show the experimental solutions by Algo. \ref{algorithm: non-dominated sorting} with $q \in [0.125, 0.2, 0.25, 0.3, 0.375, 0.5, 0.625]$.}
    \label{Fig:Pareto Front and Pareto Solution for LeNet and LR}
    \vspace{-10pt}
\end{figure*}

In this section, we use experiments to verify our theoretical analysis. 
Firstly, we display the Pareto solutions of the \XY{efficiency} constrained \XY{utility-privacy} bi-objective optimization problem in DPFL regarding noise level ($\sigma$), communication rounds ($T$), and sample ratio ($q$) in Sec.\ref{sec: 3factorexperiment}.
Secondly, we further investigate the influence of the total number of participating clients ($K$) and the local training epochs ($E$) on Pareto \XY{solutions} in Sec.\ref{sec: ablation study}.
Finally, we demonstrate the process of \XY{low cost parameter design guiding} by Thm. \ref{Thm: analytical solution} and Cor. \ref{Cor: analytical solution with given q} in Sec.\ref{sec:parameter design}.

\subsection{Experimental Setup}
We implement both LR (logistic regression) and LeNet \cite{lecun1998lenet} on the MNIST \cite{deng2012mnist} dataset, \ADD{and employ ResNet-18 \cite{he2016deep} on the CIFAR10 \cite{krizhevsky2009learning} dataset} to verify our theoretical analysis. 
The MNIST dataset includes $60000$ training samples and $10000$ testing samples. \ADD{The CIFAR-10 dataset includes $50000$ training samples and $10000$ testing samples.} The samples are identically divided to $K$ parts and kept locally within each client.

The communication rounds $T_{max}$ is set to be $200$.
For LR model, the range of $\sigma$ is set to be within $[0.010, 0.150]$. For LeNet model, the range of $\sigma$ is set to be within $[0.010, 0.050]$. \ADD{For ResNet-18 model, the range of $\sigma$ is set to be within $[0.0005, 0.0200]$.}
Having batch size $B=64$, we use stochastic gradient descent optimizer with learning rate $\eta = 0.01$ and \ADD{momentum as $0.09$}.
To better estimate the test loss, we use multiple random seeds (30 for LR, 50 for LeNet, \ADD{36 for ResNet-18}) and take average among the test loss of different seeds for different given $\sigma$, $T$, and $q$. 

\subsection{Pareto Solution in DPFL}\label{sec: 3factorexperiment}

\begin{figure*}[htbp]
    \centering
    \subfigure[\XY{Case-I ($\sigma \in [0,+\infty)$)} for LR]{
        \includegraphics[scale=0.47]{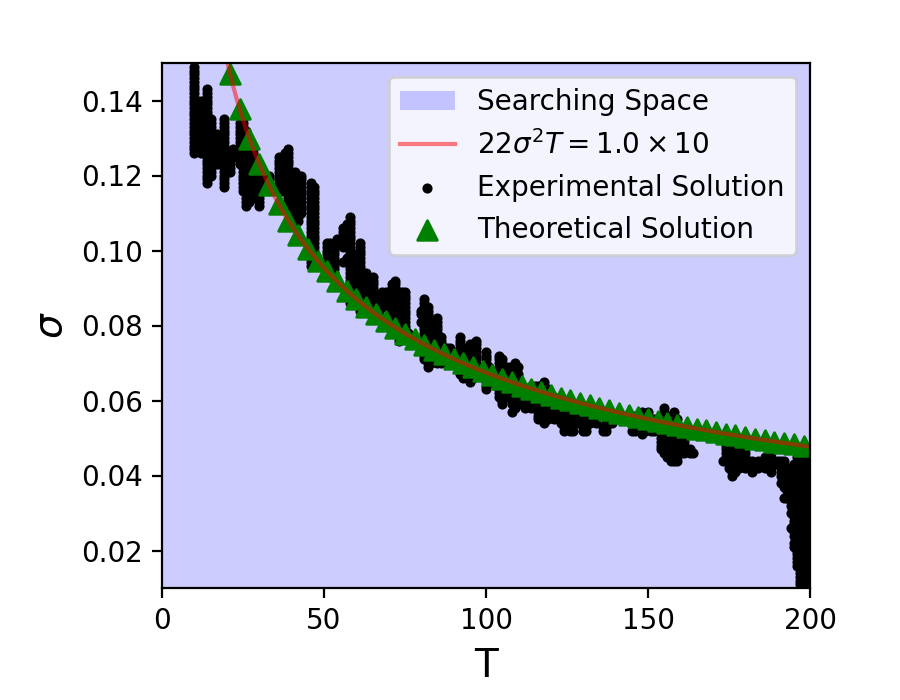} 
        \label{Fig: LR no constraint}
    }
    \subfigure[\XY{Case-II (Constrained $\sigma$ with $k \sigma_{max}^2 \lfloor\frac{\Bar{\epsilon_r}}{t_c}\rfloor > {qK}$)} for LR]{
        \includegraphics[scale=0.47]{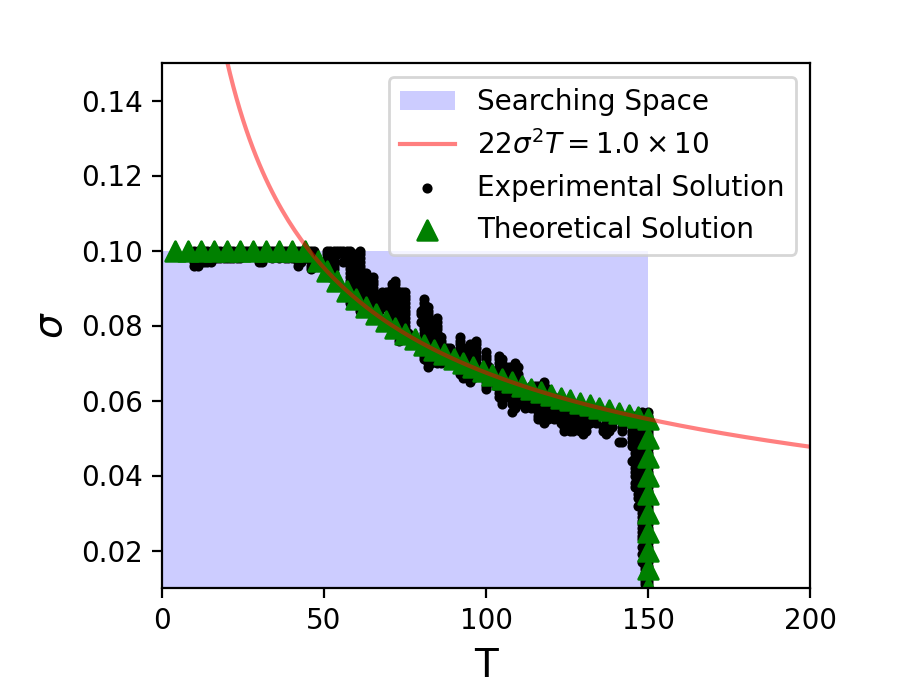} 
        \label{Fig: LR wide range}
    }
    \subfigure[\XY{Case-III (Constrained $\sigma$ with $k \sigma_{max}^2 \lfloor\frac{\Bar{\epsilon_r}}{t_c}\rfloor \leq qK$)} for LR]{
        \includegraphics[scale=0.47]{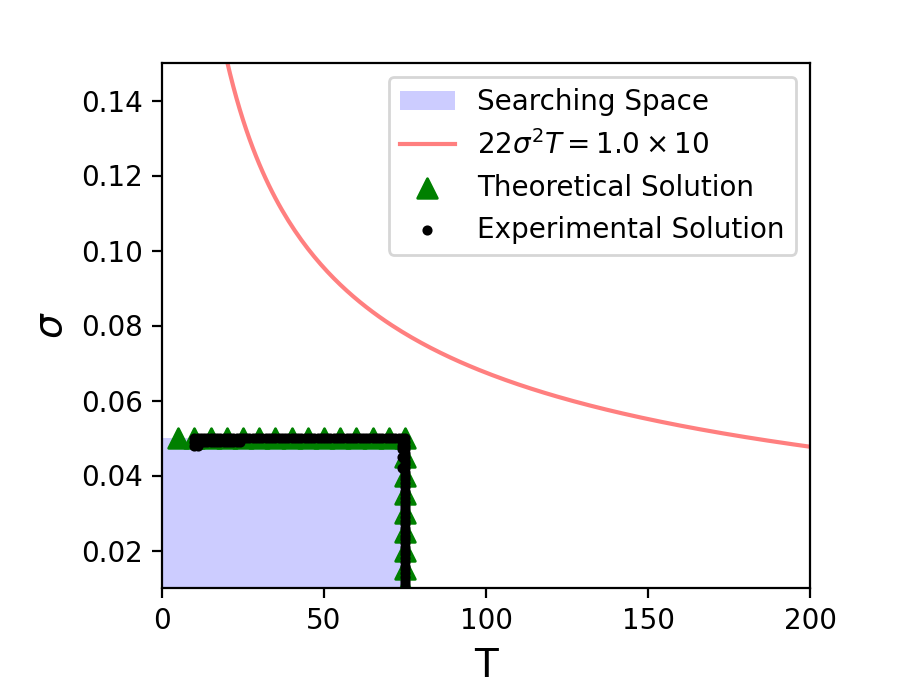}
        \label{Fig: LR small range}
    }

    \subfigure[\XY{Case-I ($\sigma \in [0,+\infty)$)} for LeNet]{
        \includegraphics[scale=0.47]{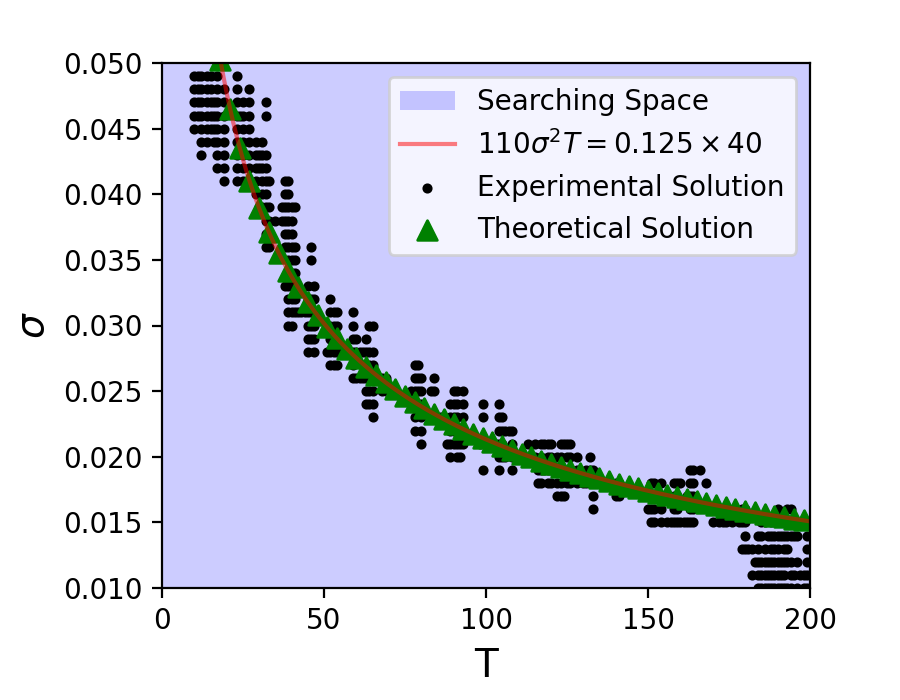} 
        \label{Fig: LeNet no constraint}
    }
    \subfigure[\XY{Case-II (Constrained $\sigma$ with $k \sigma_{max}^2 \lfloor\frac{\Bar{\epsilon_r}}{t_c}\rfloor > {qK}$)} for LeNet]{
        \includegraphics[scale=0.47]{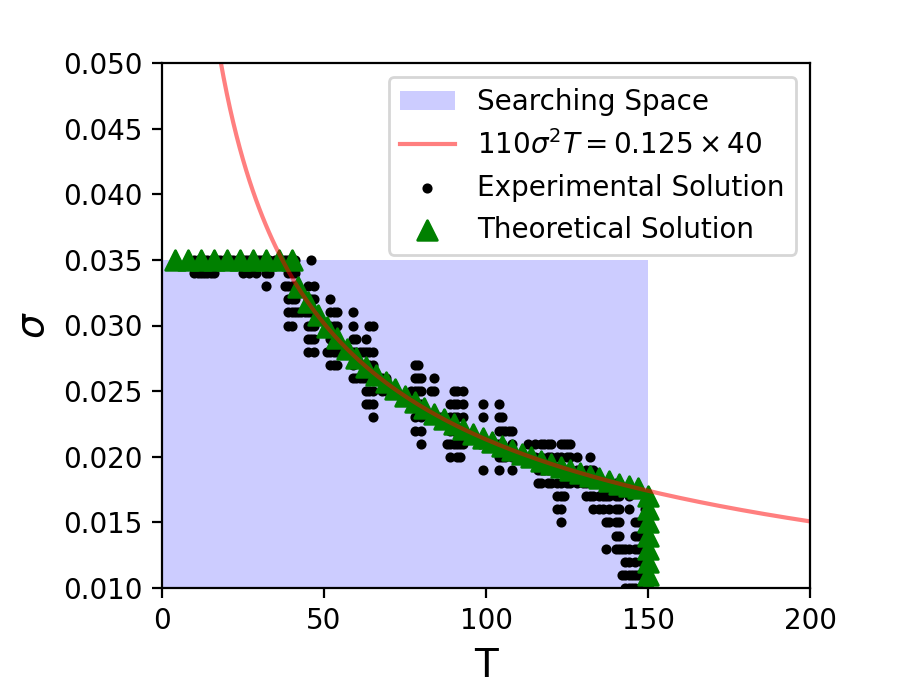}
        \label{Fig: LeNet wide range}
    }
    \subfigure[\XY{Case-III (Constrained $\sigma$ with $k \sigma_{max}^2 \lfloor\frac{\Bar{\epsilon_r}}{t_c}\rfloor \leq qK$)} for LeNet]{
        \includegraphics[scale=0.47]{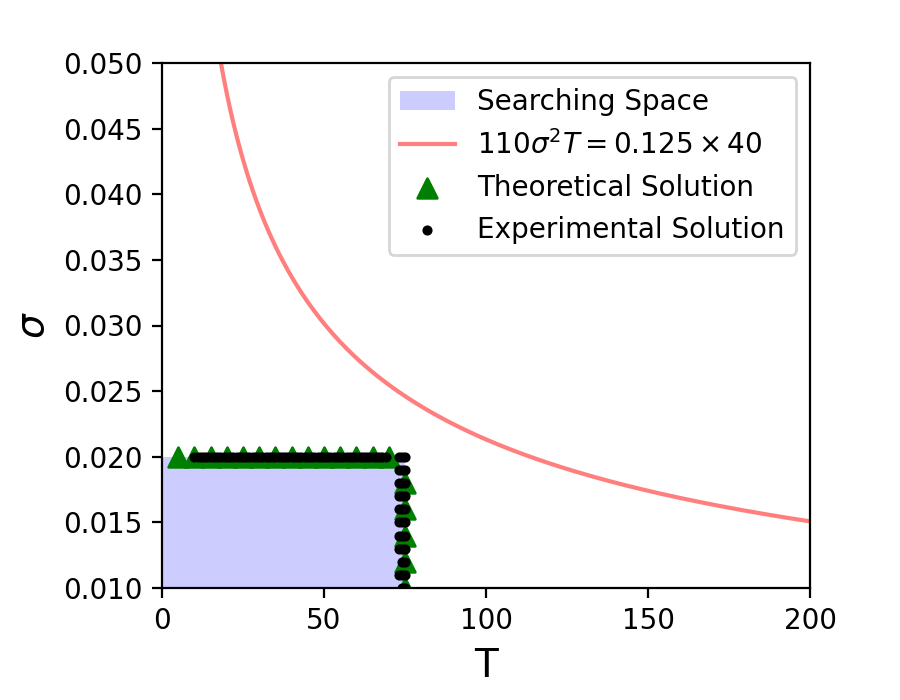}
        \label{Fig: LeNet small range}
    }

    \caption{\textbf{Pareto solution for LR and LeNet model with different searching space.} 
    With total $K=10$, $q=1.0$ and $E=20$ in LR model, Fig. \ref{Fig: LR no constraint} shows the Pareto solution in \XY{Case-I where we assume $\sigma \in [0.01,0.15]$ and $T \in [1,\dots,200]$ with $\lfloor\frac{\Bar{\epsilon_r}}{t_c}\rfloor = 200$}. Fig. \ref{Fig: LR wide range} shows the Pareto solution in the Case-II as $k\sigma_{max}^2 \lfloor\frac{\Bar{\epsilon_r}}{t_c}\rfloor > qK$ where $\lfloor\frac{\Bar{\epsilon_r}}{t_c}\rfloor = 150$ and $\sigma_{max} = 0.10$. Fig. \ref{Fig: LR small range} shows the Pareto solution in Case-III as $k\sigma_{max}^2 \lfloor\frac{\Bar{\epsilon_r}}{t_c}\rfloor \leq qK$ where $\lfloor\frac{\Bar{\epsilon_r}}{t_c}\rfloor = 75$ and $\sigma_{max} = 0.50$.
    With total $K=40$, $q=0.125$ and $E=20$ in LR model, Fig. \ref{Fig: LeNet no constraint} shows the Pareto solution in \XY{Case-I where we assume $\sigma \in [0.01,0.05]$ and $T \in [1,\dots,200]$ with $\lfloor\frac{\Bar{\epsilon_r}}{t_c}\rfloor = 200$}. Fig. \ref{Fig: LeNet wide range} shows the Pareto solution in the wide range case as $k\sigma_{max}^2 \lfloor\frac{\Bar{\epsilon_r}}{t_c}\rfloor > qK$ where $\lfloor\frac{\Bar{\epsilon_r}}{t_c}\rfloor = 150$ and $\sigma_{max} = 0.035$. Fig. \ref{Fig: LeNet small range} shows the Pareto solution in the small range case as $k\sigma_{max}^2 \lfloor\frac{\Bar{\epsilon_r}}{t_c}\rfloor \leq qK$ where $\lfloor\frac{\Bar{\epsilon_r}}{t_c}\rfloor = 75$ and $\sigma_{max} = 0.02$.}
    \label{Fig:LR_LeNet_different range}
\end{figure*}

We present the Pareto solutions from both theoretical and experimental perspectives, as depicted in Fig. \ref{Fig:Pareto Front and Pareto Solution for LeNet and LR}. In our approach, we employ non-dominated sorting, as outlined in Algo. \ref{algorithm: non-dominated sorting}, to identify the experimental Pareto solution represented by the black points. We then compare this experimental solution on MNIST and CIFAR-10 with the theoretical solution derived in Thm. \ref{Thm: analytical solution}, illustrated separately in Fig. \ref{Fig: compare LR}, Fig. \ref{Fig: compare LeNet}, and \ADD{\ref{Fig: compare ResNet}}.
Notably, Fig. \ref{Fig:Pareto Front and Pareto Solution for LeNet and LR} reveals a strong alignment between the experimental and theoretical Pareto solutions, signifying the accuracy of our theoretical model.

Furthermore, to provide additional confirmation of the Pareto solutions derived in Cor. \ref{Cor: analytical solution with given q}, we present a comparison of the theoretical and experimental Pareto solutions for different cases with a fixed $q$, as illustrated in Fig. \ref{Fig:LR_LeNet_different range}.

Across all \XY{the three cases mentioned in Thm. \ref{Thm: analytical solution}}, the Pareto solutions obtained through Algo. \ref{algorithm: non-dominated sorting} (depicted as black points) align closely with the theoretical solutions (represented by green \XY{surface}) established in Cor. \ref{Cor: analytical solution with given q}. This consistency serves as strong evidence of the validity and reliability of our analytical model in predicting the Pareto front under various conditions.

\subsection{Ablation Study  on $K, E$} \label{sec: ablation study}
In this subsection, we investigate the impact of local epochs ($E$) and the number of clients ($K$) on the Pareto solutions with a fixed sample ratio ($q$), as discussed in Cor. \ref{Cor: analytical solution with given q}. Our analysis yields the following three key conclusions:
\begin{itemize}
    \item \textbf{Local Epochs ($E$) Exhibit Minimal Influence:} We find that the local epochs ($E$) have a negligible impact on the Pareto solutions. This observation is consistent across cases where $E$ is set to different values, such as $E=5, 10, 20$, as demonstrated in Fig. \ref{Fig:Local Epochs Makes No Difference}. This similarity in Pareto solutions indicates that the choice of $E$ does not significantly alter \XY{the parameters for achieving the optimal} trade-off between privacy and utility.

    \item \textbf{Number of Clients ($K$) Affects Pareto Solution directly:} We observe that changes in the experimental Pareto solution concerning $\sigma$ and $T$ (as indicated by the black points in Figure \ref{Fig:Number of Participating Clients makes difference}) exhibit an direct relationship with the number of participating clients, denoted as $K$. Specifically, the relationship between $\sigma$ and $T$ follows the equation $k \sigma^2 T = {qK}$, which is represented by the green line in Figure \ref{Fig:Number of Participating Clients makes difference}. This relationship implies that as the number of clients increases, the values of $\sigma$ and $T$ must adjust accordingly to maintain consistent Pareto solutions.
  
\end{itemize}

These findings provide valuable insights into the factors that influence the Pareto solutions in the context of federated learning with differential privacy, enabling practitioners to make informed decisions when designing privacy-preserving algorithms.


\begin{figure*}
    \centering
    \subfigure[Local Epochs $E=5$ (LR)]{
        \includegraphics[scale=0.47]{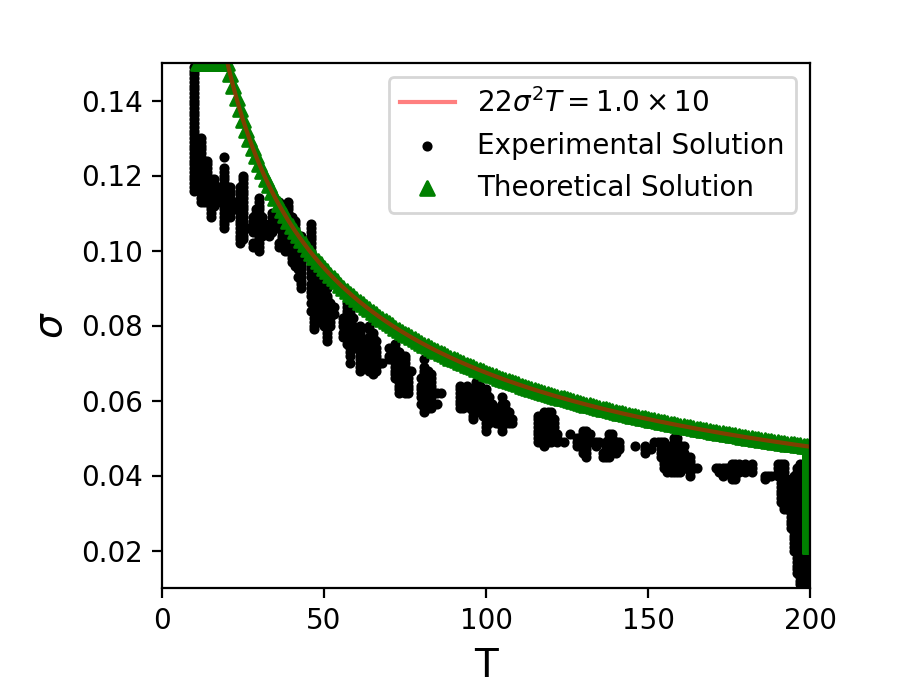} 
        \label{figLR:E=5}
    }
    \subfigure[Local Epochs $E=10$ (LR)]{
        \includegraphics[scale=0.47]{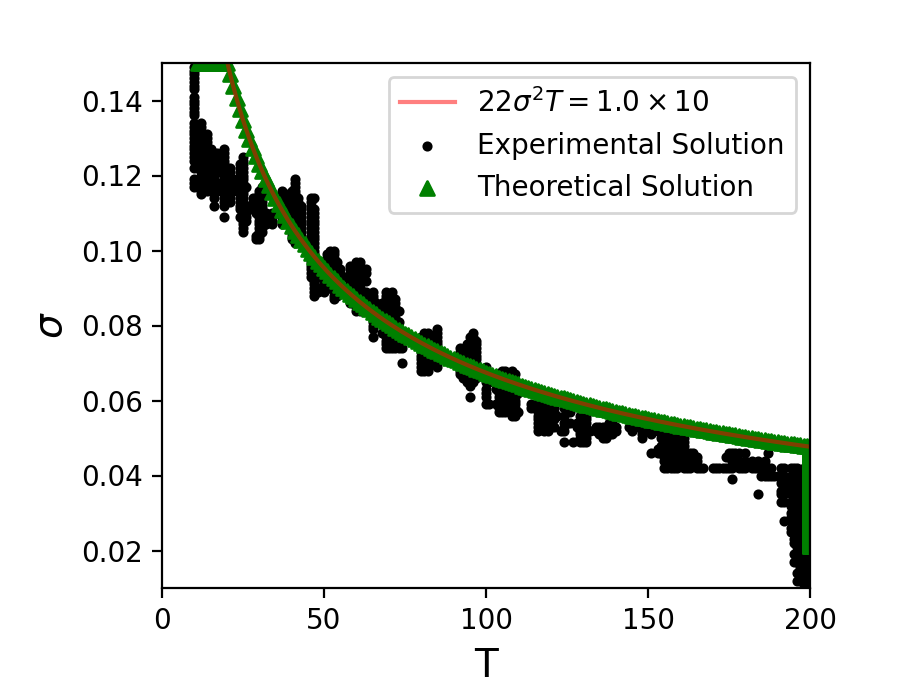} 
        \label{figLR:E=10}
    }
    \subfigure[Local Epochs $E=20$ (LR)]{
        \includegraphics[scale=0.47]{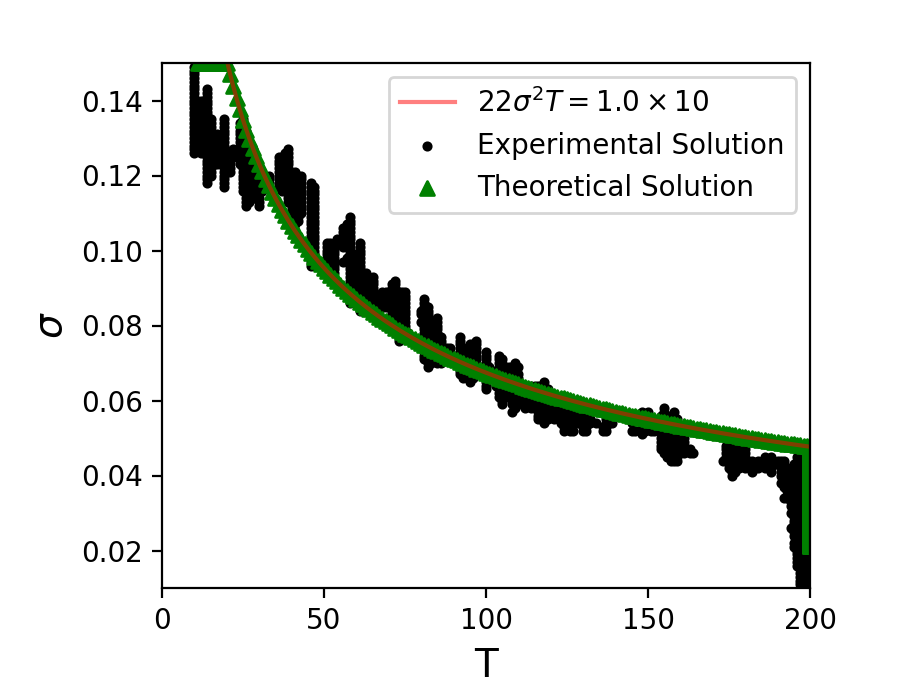}
        \label{figLR:E=20}
    }

    \subfigure[Local Epochs $E=5$ (LeNet)]{
        \includegraphics[scale=0.47]{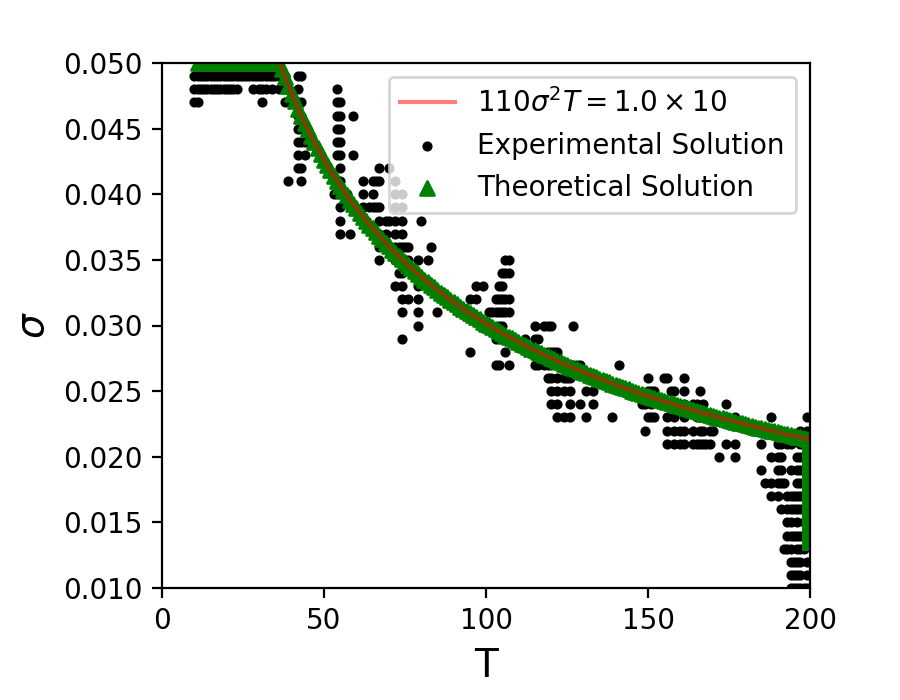} 
        \label{figLeNet:E=5}
    }
    \subfigure[Local Epochs $E=10$ (LeNet)]{
        \includegraphics[scale=0.47]{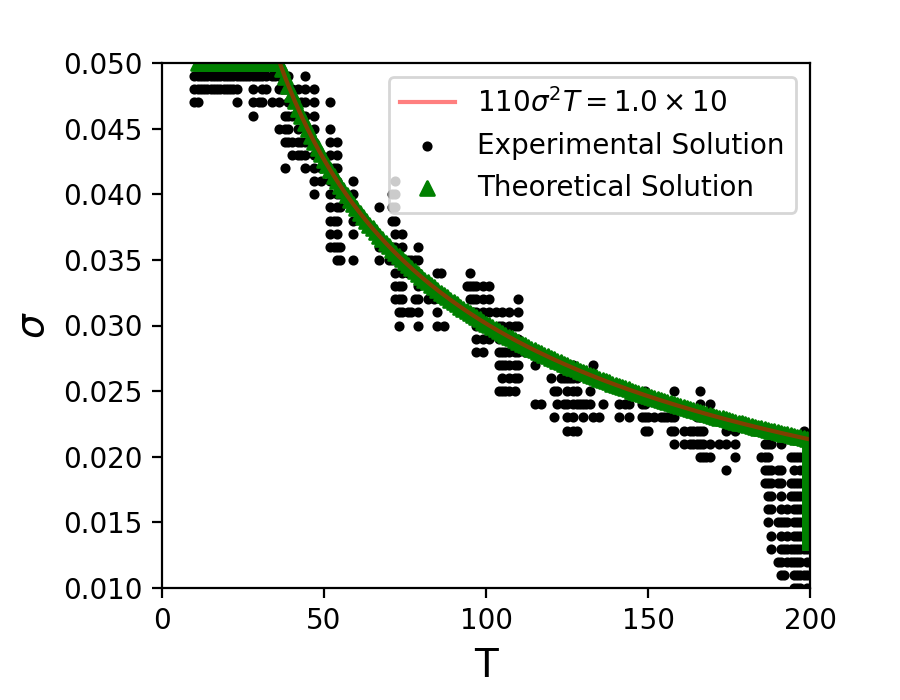} 
        \label{figLeNet:E=10}
    }
    \subfigure[Local Epochs $E=20$ (LeNet)]{
        \includegraphics[scale=0.47]{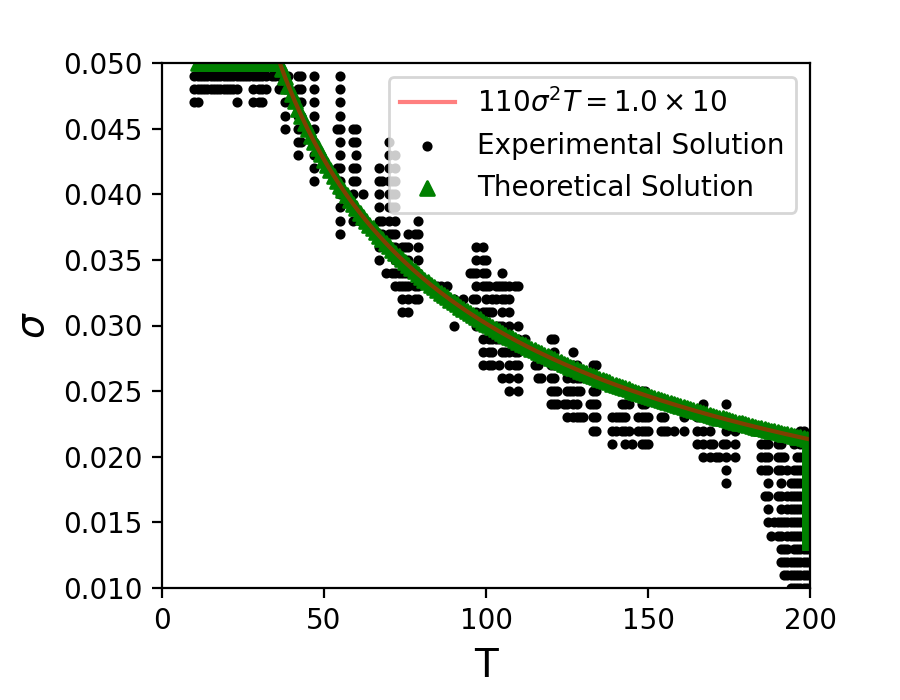}
        \label{figLeNet:E=20}
    }
    \DeclareGraphicsExtensions.
    \caption{\textbf{Pareto solutions for LR and LeNet with different local epochs.} 
    With $K=10$ and $q=1.0$, Fig \ref{figLR:E=5}, \ref{figLR:E=10}, and \ref{figLR:E=20} show the Pareto solution for LR with local epochs $E=5,10,20$ respectively.
    With $K=10$ and $q=1.0$, Fig \ref{figLeNet:E=5}, \ref{figLeNet:E=10}, and \ref{figLeNet:E=20} show the Pareto solution for LeNet with local epochs $E=5,10,20$ respectively.}
    \label{Fig:Local Epochs Makes No Difference}
\end{figure*}


\begin{figure*}
    \centering
    \subfigure[Total Clients $K=5$ (LR)]{
        \includegraphics[scale=0.47]{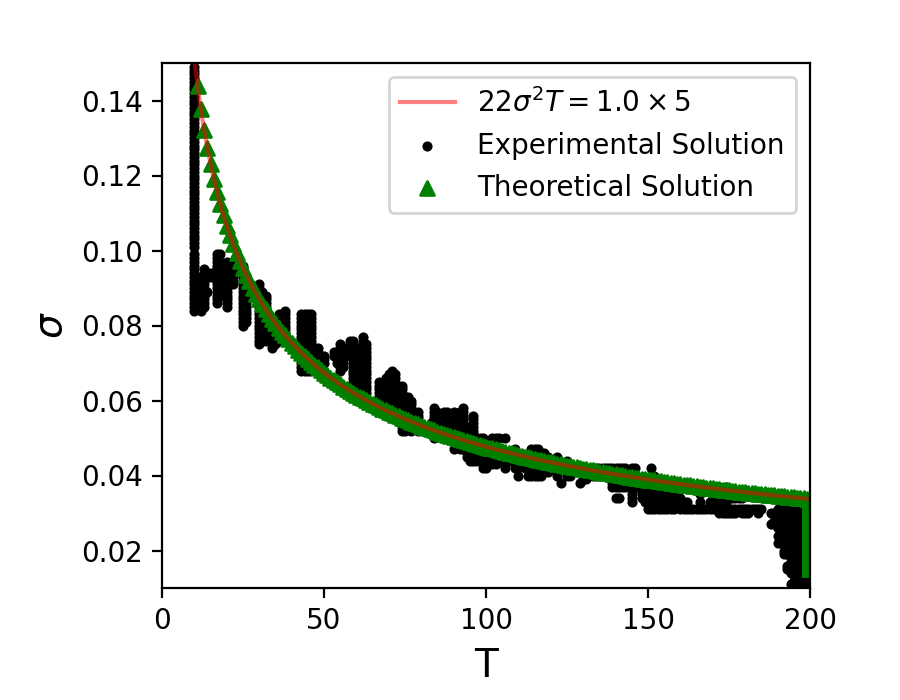} 
        \label{figLR:K=5}
    }
    \subfigure[Total Clients $K=10$ (LR)]{
        \includegraphics[scale=0.47]{Pictures/LR_10_1.0_200_20.png} 
        \label{figLR:K=10}
    }
    \subfigure[Total Clients $K=20$ (LR)]{
        \includegraphics[scale=0.47]{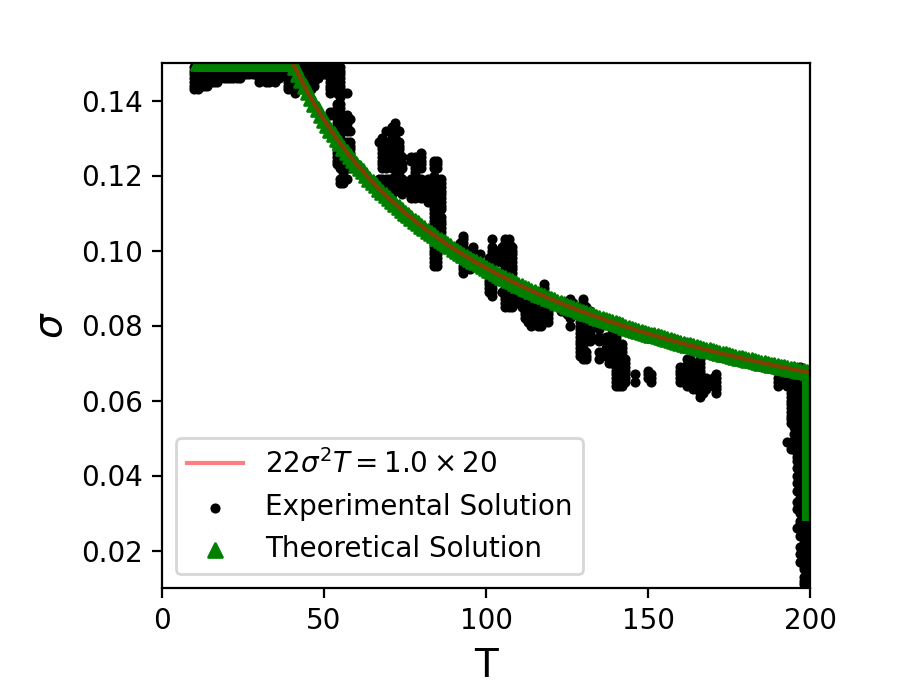}
        \label{figLR:K=20}
    }

    \subfigure[Total Clients $K=5$ (LeNet)]{
        \includegraphics[scale=0.47]{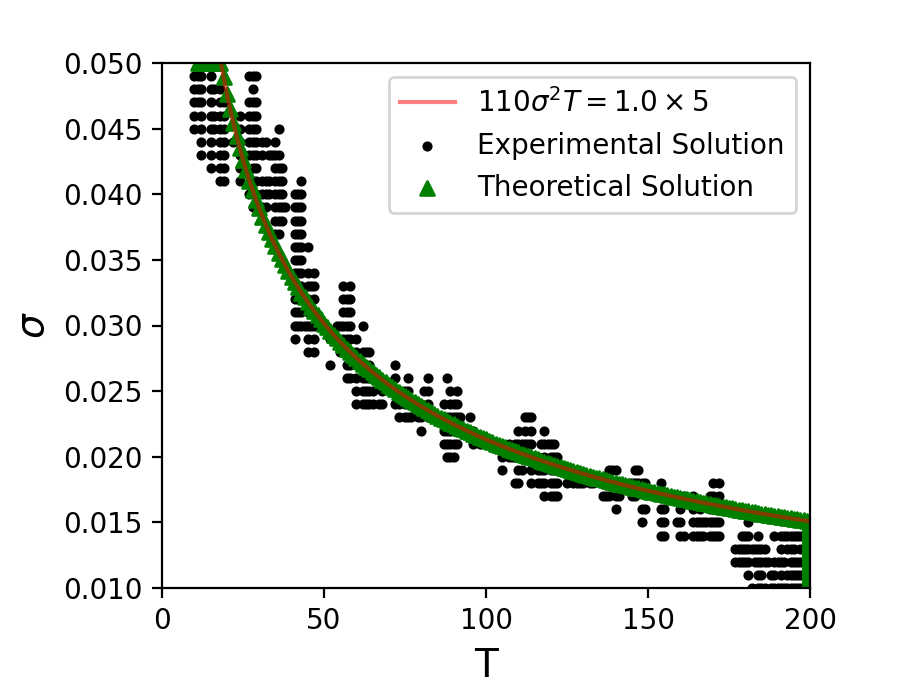} 
        \label{figLeNet:K=5}
    }
    \subfigure[Total Clients $K=10$ (LeNet)]{
        \includegraphics[scale=0.47]{Pictures/LeNet_10_1.0_200_20.png} 
        \label{figLeNet:K=10}
    }
    \subfigure[Total Clients $K=20$ (LeNet)]{
        \includegraphics[scale=0.47]{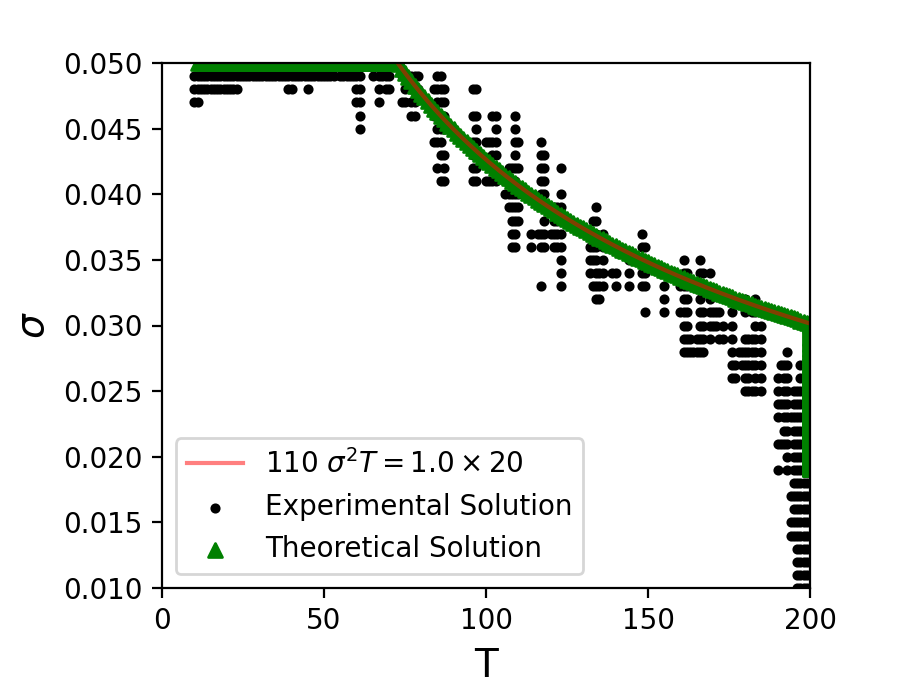}
        \label{figLeNet:K=20}
    }
    \DeclareGraphicsExtensions.
    \caption{\textbf{Pareto solutions for LR and LeNet with different number of total clients.} 
    With $q=1.0$ and $E=20$, Fig. \ref{figLR:K=5}, \ref{figLR:K=10}, \ref{figLR:K=20} show the Pareto solution for LR with number of total clients $K=5, 10, 20$ respectively. 
    With $q=1.0$ and $E=20$, Fig. \ref{figLeNet:K=5}, \ref{figLeNet:K=10}, \ref{figLeNet:K=20} show the Pareto solution for LR with number of total clients $K=5, 10, 20$ respectively.}
    \label{Fig:Number of Participating Clients makes difference}
\end{figure*}

\subsection{Demonstration of \XY{low cost} Parameter Design}\label{sec:parameter design}
In the real scenario, due to \XY{the privacy preserved limitation and training efficiency constraint}, it is not realistic to do a large number of experiments to search for the best parameters as communication rounds ($T$), noise level ($\sigma$) and sample ratio ($q$). In other words, making sure the privacy leakage and utility loss reach the Pareto front \XY{with acceptable training efficiency} in DPFL can be challenging job. To deal with the challenge, the \XY{theoretical analysis} (Thm. \ref{Thm: analytical solution}) can serve as an important guidance for the \XY{low cost} parameter design of DPFL framework. Specifically, the parameter design process is structured into two distinct stages:

\begin{figure*}
    \centering
    \subfigure[Pareto Solution Fitting (LR)]{
        \includegraphics[scale=0.47]{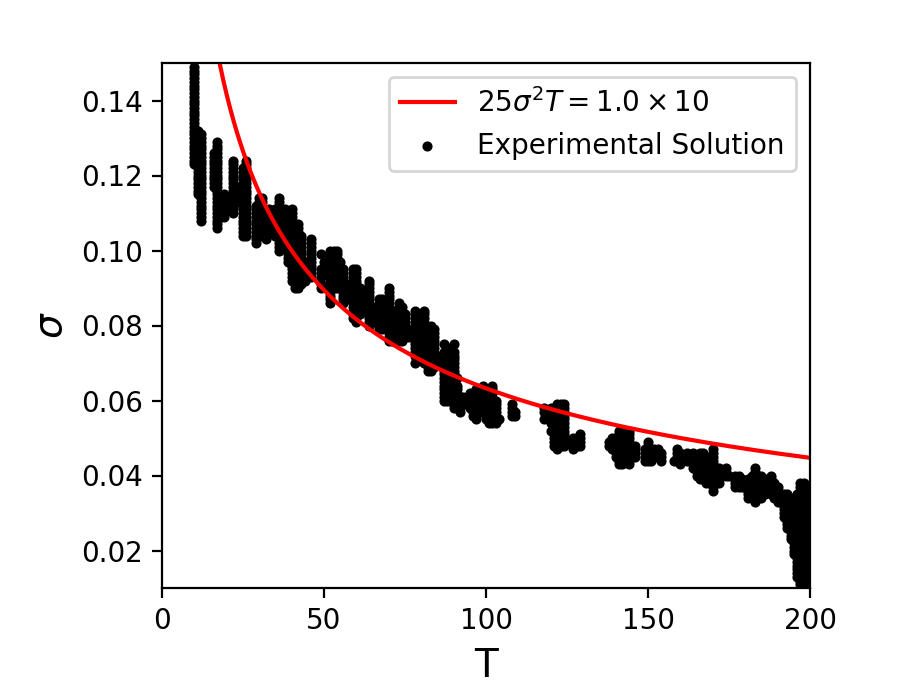} 
        \label{fig:guide_fitting_LR}
    }
    \subfigure[Pareto Solution Comparison (LR)]{
        \includegraphics[scale=0.47]{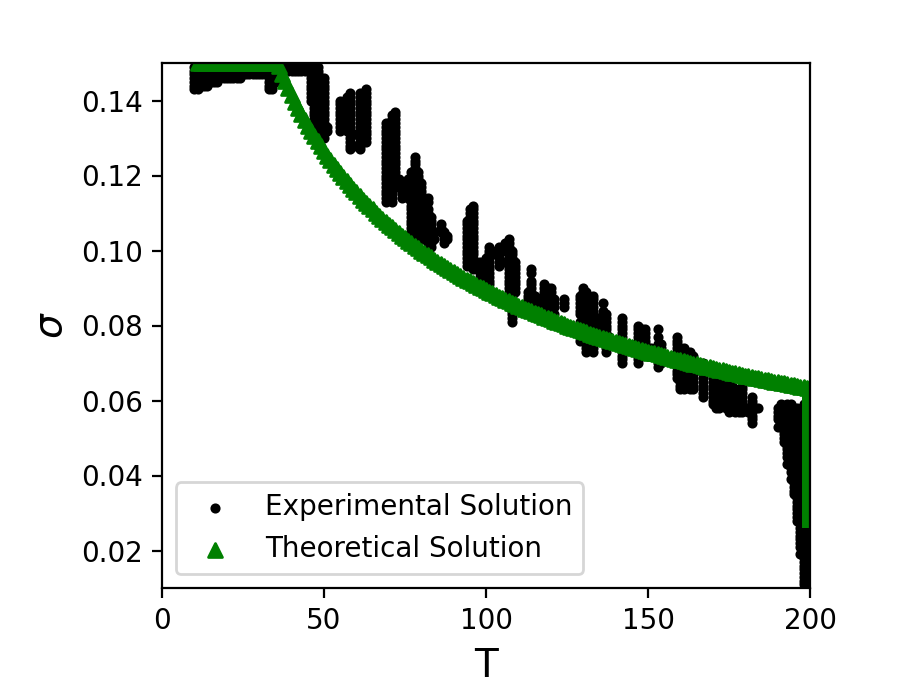} 
        \label{fig:comparison_solution_LR}
    }
    \subfigure[Pareto Front Comparison (LR)]{
        \includegraphics[scale=0.47]{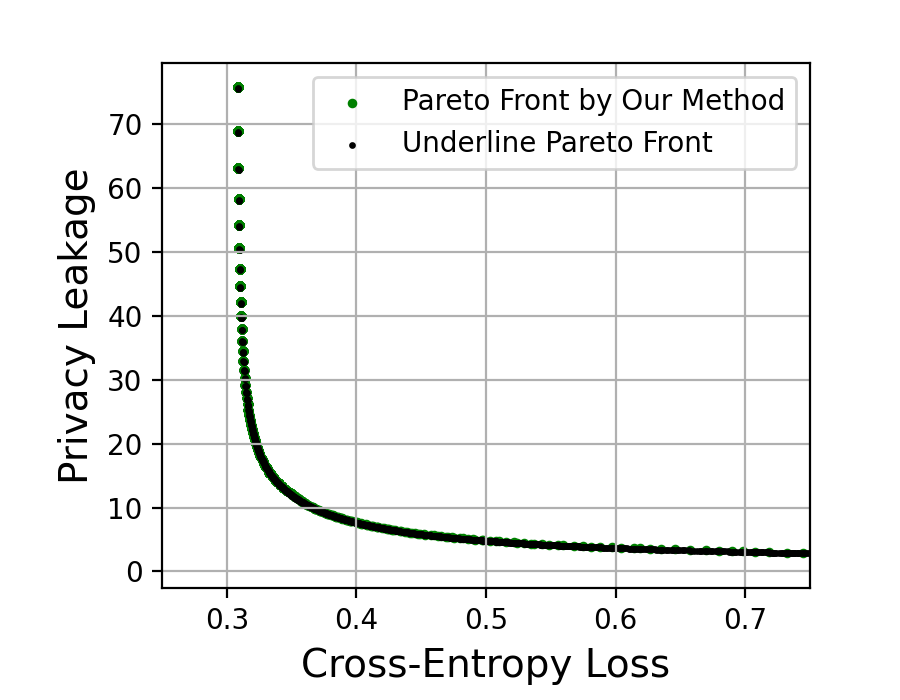} 
        \label{fig:comparison_front_LR}
    }

    \subfigure[Pareto Solution Fitting (LeNet)]{
        \includegraphics[scale=0.47]{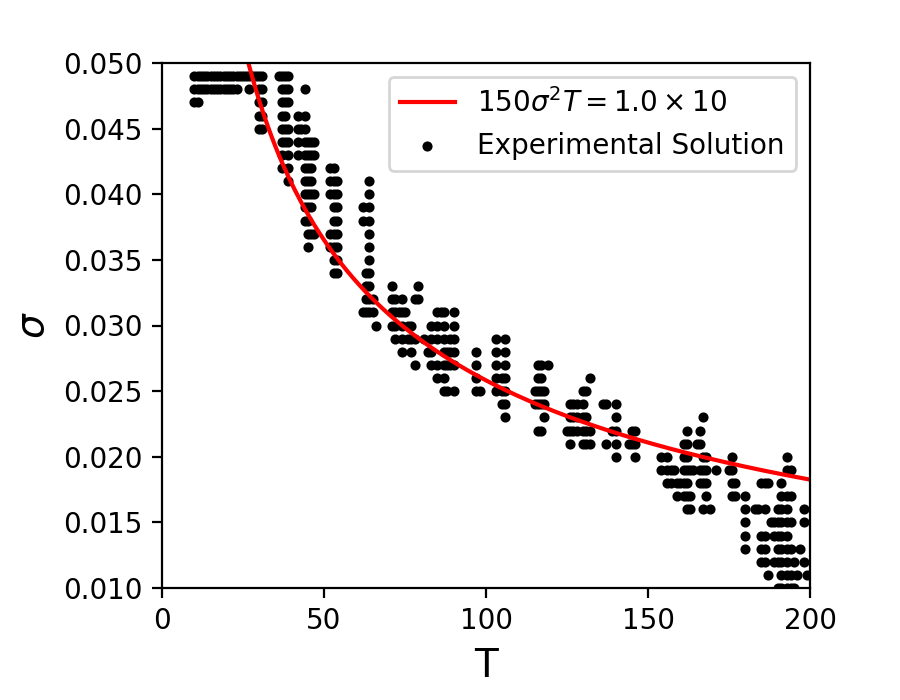} 
        \label{fig:guide_fitting_LeNet}
    }
    \subfigure[Pareto Solution Comparison (LeNet)]{
        \includegraphics[scale=0.47]{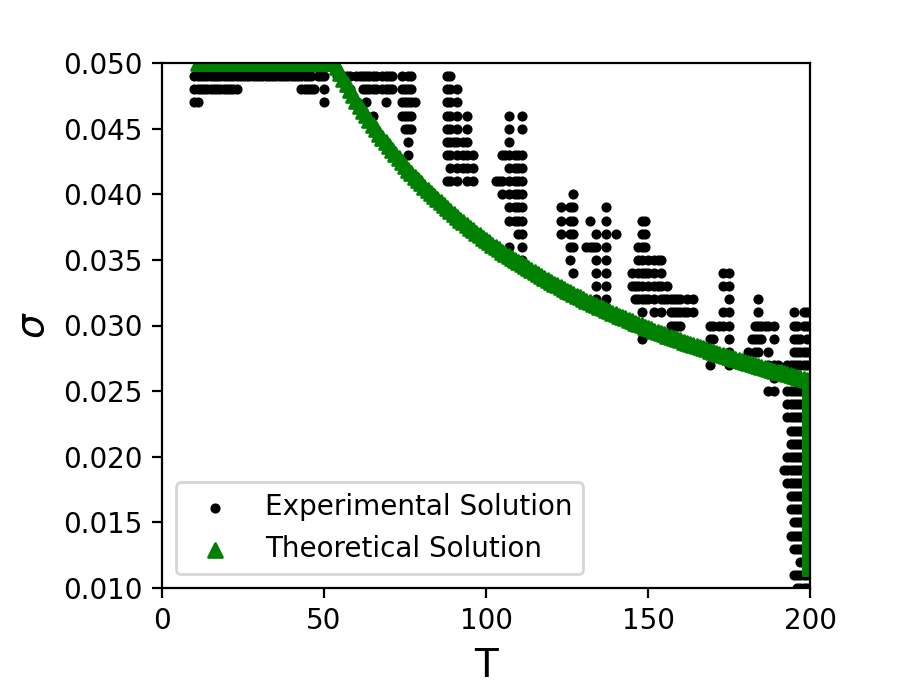} 
        \label{fig:comparison_solution_LeNet}
    }
    \subfigure[Pareto Front Comparison (LeNet)]{
        \includegraphics[scale=0.47]{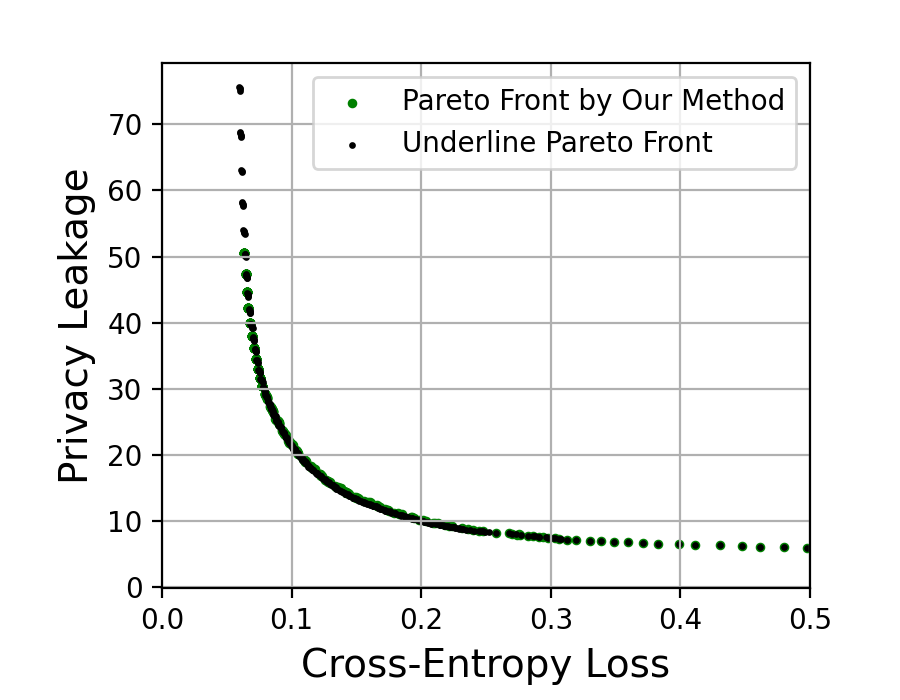} 
        \label{fig:comparison_front_LeNet}
    }
    \DeclareGraphicsExtensions.
    \caption{\textbf{Parameter Design Guidance by Public Dataset.}
    With $K=10$ and $q=1.0$, Fig. \ref{fig:guide_fitting_LR} shows the Pareto solution on public dataset and the solutions are fitted by the line $25 \sigma^2 T = 1.0 \times 10$. Fig. \ref{fig:comparison_solution_LR} compares the derived theoretical Pareto solution from public dataset and the underline experimental Pareto solution of case $K=40$ and $q=0.5$. Fig. \ref{fig:comparison_front_LR} compares the derived theoretical Pareto front from public dataset and the underline experimental Pareto front of case $K=40$ and $q=0.5$.
    With $K=10$ and $q=1.0$, Fig. \ref{fig:guide_fitting_LeNet} shows the Pareto solution on public dataset and the solutions are fitted by the line $150 \sigma^2 T = 1.0 \times 10$. Fig. \ref{fig:comparison_solution_LeNet} compares the derived theoretical Pareto solution from public dataset and the underline experimental Pareto solution of case $K=40$ and $q=0.5$. Fig. \ref{fig:comparison_front_LeNet} compares the derived theoretical Pareto front from public dataset and the underline experimental Pareto front of case $K=40$ and $q=0.5$.
    }
    \label{fig: Public dataset guide parameter design}
\end{figure*}


\begin{enumerate}
\item In the first step, using a small portion of a public dataset and keeping the sample ratio fixed at $q_0$, \XY{given the acceptable training efficiency constraint}, along with the total number of clients set to $K_0$, clients undertake pre-experiments, following the procedure outlined in Algo. \ref{algorithm: non-dominated sorting}. These experiments aim to identify the Pareto optimal solutions corresponding to the noise level ($\sigma$) and the global epoch ($T$), represented as the black points in Fig. \ref{fig:guide_fitting_LR} and \ref{fig:guide_fitting_LeNet}.
Subsequently, clients extrapolate the behavior of these black points by utilizing the relationship $k\sigma^2T = q_0K_0$ (depicted as the red line in Fig. \ref{fig:guide_fitting_LR} and \ref{fig:guide_fitting_LeNet}). This extrapolation further enables the estimation of the constant $k$\footnote{It's important to note that this constant $k$ remains consistent whether considering two parameters, $\sigma$ and $T$, or three parameters, namely, $q$, $T$, and $\sigma$, as indicated in Cor. \ref{Cor: analytical solution with given q}.}.
\item 
In the second step, the server uniformly distributes \XY{the common sample ratio $q_r$ and global training epoch ($T_r$)} to all participating clients.
Each client then calculates their respective noise level, denoted as $\sigma_r$, based on the formula $\sigma_r = \sqrt{\frac{q_rK}{kT_r}}$.
\end{enumerate}

Take two examples for LeNet and LR model, let the number of clients ($K_r$) and sample ratio ($q_r$) be $40$ and $0.5$ respectively. For the public dataset, we randomly sample $10\%$ of original MNIST dataset. 
With $q_0=1.0$ and $K_0=10$, the constant $k$ is estimated to be $2.5$ and $15.0$ for LR and LeNet model separately as illustrated in Fig. \ref{fig:guide_fitting_LR} and \ref{fig:guide_fitting_LeNet}. 
Then, to validate the correctness of the estimated constant $k$, we compare the theoretical solutions and experimental solutions of LR and LeNet model shown in Fig. \ref{fig:comparison_solution_LR} and \ref{fig:comparison_solution_LeNet} respectively. 
We derive the theoretical solutions by using the consistent constant $k$ and represent the solutions by green points. The black points represents the experimental solutions, which are obtained via Algo. \ref{algorithm: non-dominated sorting} using all MINTS data.
Figure \ref{fig:comparison_front_LR} and \ref{fig:comparison_front_LeNet} demonstrate that these black points align with the theoretical Pareto front obtained using the estimated values of $k$.

We compare the computation cost (the time of guiding parameter design and the time of achieving Pareto set) of our method with the existing two type of methods. The first method named as  \textbf{Training with Budget \cite{geyer2017differentially,wei2020federated}} is to minimize the utility loss when the privacy leakage is below the specified privacy budget.
The second method named as \textbf{Training until Convergences \cite{yang2019federatedconceptsandapplications,wu2020theoretical}} is to minimize the utility loss until the model convergence with different noise level and identify the optimal noise level with the least privacy leakage. 

Table \ref{table:comparison1} show the time to get the optimal parameter design with given $T$ and $q$ from server.
For our proposed method, the process of guiding parameter design can be divided into two parts: 1) The pre-experiment time denoted as $t_0$ is much smaller than the formal training time since the public dataset using in pre-experiment is much smaller; 2) The main training time is global training epoch $T_r$ since the clients directly calculate the optimal $\sigma$ with given $q_r$ and $T_r$ according to Theorem \ref{Cor: analytical solution with given q}.  
For the method \textbf{Training with Budget} and \textbf{Training Until Convergence}, they needs to search the optimal $\sigma$, thus, its computation complexity is $\Theta(n_{\sigma}T_r)$, where $n_{\sigma}$ is the number of different $\sigma$. 

Similarly, Table \ref{table:comparison2} contrasts the time required to locate the entire Pareto set under different methods. It reveals that our proposed method reduces the search time by $n_q$
times compared to the two methods, \textbf{Training with Budget} and \textbf{Training Until Convergence}, i.e., it eliminates the need to search over $q$. This efficiency is attributed to Theorem \ref{Thm: analytical solution}, which aids in determining the sample ratio $q$ given $\sigma$ and $T$.

\begin{table}[htbp]
\caption{Complexity Comparison (Guiding Parameter Design)}
    \begin{center}
        \begin{tabular}{lccc}
            \hline
            Method & Computational Complexity \\ 
            \hline 
            Our Method & $t_0 + \Theta(T_r)$ \\
            Training with Budget \cite{geyer2017differentially,wei2020federated} & $\Theta(n_{\sigma}T_r)$ \\
            Training until Convergence \cite{yang2019federatedconceptsandapplications,wu2020theoretical} & $\Theta(n_{\sigma} T_r)$ \\
            \hline
        \end{tabular}
    \end{center}
    \label{table:comparison1}
\end{table}

\begin{table}[htbp]
\caption{Table of Complexity Comparison (Achieving Pareto Set)}
    \begin{center}
        \begin{tabular}{lccc}
            \hline
            Method & Computational Complexity \\ 
            \hline 
            Our Method & $t_0 + \Theta(n_{\sigma}T_r)$ \\
            Training with Budget \cite{geyer2017differentially,wei2020federated} & $\Theta(n_{q}n_{\sigma}T_r)$ \\
            Training until Convergence \cite{yang2019federatedconceptsandapplications,wu2020theoretical} & $\Theta(n_{q} n_{\sigma}T_r)$ \\
            \hline
        \end{tabular}
    \end{center}
    \label{table:comparison2}
\end{table}

\section{Conclusion and Discussion}

\XY{Bearing in mind the aim of achieving optimal privacy-utility trade-off within an acceptable training efficiency constraint, we formulate the constrained bi-objective optimization formulation in Differential Privacy Federated Learning (DPFL).}
\XY{The theoretical analysis of the constrained bi-objective optimization problem} can serve as an important guidance of the parameter design in DPFL, which could help us get rid of the expensive neural network training and federated system evaluation \cite{deb2002fast, lin2022pareto, kang2023optimizing}. 
By using a small proportion of public data, we can get an approximate estimation of the exact \XY{relationship} among $T$, $\sigma$, and $q$.

In future, we can also do similar theoretical analysis for different protection mechanisms in the federated learning framework, as long as it has or could be given a good enough upper-bound of utility loss.


\bibliographystyle{IEEEtran}
\bibliography{ref}


\end{document}